\title{Reproducing kernel Hilbert spaces in the mean field limit}
\author{ Christian Fiedler\thanks{Institute for Data Science in Mechanical Engineering, RWTH Aachen University, 52068 Aachen, Germany; {\tt \{christian.fiedler,trimpe\}@dsme.rwth-aachen.de}}, Michael Herty\thanks{Institute of Geometry and Practical Mathematics, RWTH Aachen University, 52056 Aachen,
		Germany; {\tt \{herty, rom, segala\}@igpm.rwth-aachen.de}}, Michael Rom\footnotemark[2], Chiara Segala\footnotemark[2], Sebastian Trimpe\footnotemark[1]  }
\date{\today}
\newcommand{\Permutations}{\mathcal{S}}
\newcommand{\Pb}{\mathcal{P}} 
\newcommand{\dkm}{d_{\text{KR}}} 
\newcommand{\McS}{{\text{McK}}} 
\newcommand{\tw}{\tilde{\omega}} 
\newcommand{\muhat}{\hat{\mu}}
\newcommand{\convmode}[1]{\overset{#1}{\longrightarrow}}
\newcommand{\weakstar}{weak$\ast$}
\newtheorem*{fact*}{Fact}
\theoremstyle{definition}
\newtheorem{defn}{Definition}[section]
\newtheorem{proposition}[defn]{Proposition}
\newtheorem{remark}[defn]{Remark}
\newtheorem{theorem}[defn]{Theorem}
\newtheorem{corollary}[defn]{Corollary}
\newtheorem{assumption}[defn]{Assumption}
\newtheorem{definition}[defn]{Definition}
\newtheorem*{theorem*}{Theorem}
\newtheorem*{definition*}{Definition}
\newcommand{\N}{\mathbb{N}} 
\newcommand{\R}{\mathbb{R}} 
\newcommand{\Rnn}{\mathbb{R}_{\geq 0}} 
\newcommand{\Rp}{\mathbb{R}_{>0}} 
\newcommand{\Np}{\mathbb{N}_+} 
\newcommand{\defm}[1]{\emph{#1}}
\begin{document}

\maketitle

\begin{abstract}
Kernel methods, being supported by a well-developed theory and coming with efficient algorithms, are among the most popular and successful machine learning techniques.
From a mathematical point of view, these methods rest on the concept of kernels and function spaces generated by kernels, so--called reproducing kernel Hilbert spaces.
Motivated by recent developments of learning approaches in the context of interacting particle systems, we investigate kernel methods acting on data with many measurement variables.
We show the rigorous mean field limit of kernels and provide a detailed analysis of the limiting reproducing kernel Hilbert space.
Furthermore, several examples of kernels, that allow a rigorous mean field limit, are presented.
\end{abstract}

\section{Introduction}

Interacting particle systems and related mean field  models have been recently used in a large variety of challenging domains, including biology, social systems and economics, see e.g. \cite{bellomo2017active,gibelli2019crowd,gibelli2020crowd2}.
These models originated in statistical mechanics, in particular, the statistical theory of rarefied gases \cite{pareschi2013interacting,cercignani2013mathematical}, but recent extensions to other areas of application like  biology or sociology lead to novel challenges, both on the mathematical as well as modeling level \cite{Cristiani:2014aa,Pareschi:2013aa}.  
Classical examples that are modeled by self-propelled particles include animals and robots, see e.g.~\cite{bellomo20review, MR2974186, MR2165531, MR3119732, MR2861587, MR2580958,Giselle}. Those particles interact according to nonlinear models encoding various social rules such as attraction, repulsion and alignment. A particular feature of such models are their rich dynamical phenomena, which include different types of emerging patterns like consensus, flocking, and milling \cite{MR2887663, MR2247927,cucker2007emergent,d2006self,motsch2014heterophilious}.

A particular difficulty arises from the complexity and heterogeneity of the systems under consideration in these new domains. In classical applications, like the statistical mechanics of gases, first-principle modeling approaches have been very successfully used. However, such principles are in general not directly available in the context of modeling of social behavior. Hence,  learning techniques in the field of interacting particle systems and related kinetic models \cite{bongini2017inferring} have been introduced as a substitute for possibly unknown modeling.  These learning approaches typically use trajectory data to approximate interaction rules, instead of deriving these from first principles.  This area has seen considerable activity in recent years, resulting in both algorithmic and theoretical advances, e.g.~\cite{lu2019nonparametric,lu2021learning,lu2021blearning}. The limit of  infinitely many particles leads to mean field equations for the evolution of the particle density.  They have been recently also used to tackle e.g.\ clustering problems in the advent of large-data (see e.g.~\cite{MR4160177}), deep neural networks \cite{mei2019mean,2639800120222271}  or large-scale optimization problems \cite{MR4329816,MR4121318,MR3597012,MR4222159,MR4444146,herty2017performance}. While in other instances emerging patterns could be  established by analyzing the limit of infinitely many particles \cite{MR2536247, MR2438213, MR2740099,  MR2425606, MR3194652,carrillo2014derivation}. We also refer to  \cite{carrillo2010particle,carrillo2014derivation,canizo2011well} for a rigorous treatment of the mean field limit of large scale interacting particle systems.

Motivated by these developments, we consider the mean field limit for kernel methods---a powerful class of machine learning methods. In particular, we consider the situation of learning methods operating on data with many inputs and utilize mean field theory to investigate the limit of infinitely many inputs. The interest in kernel methods in the large data limit is two-fold. First, those methods are commonly used and are a very popular and powerful machine learning tool, see e.g. \cite{scholkopf2002learning}, and \cite{williams2006gaussian, kanagawa2018gaussian} for results on Gaussian processes. Second, they are supported by a well--defined theory, making them amenable for rigorous analysis \cite{steinwart2008support,shawe2004kernel}.

To exemplify and for the sake of concreteness, we consider learning tasks involving functionals on the state  space $X$ of interacting particle systems with $M$ agents, see e.g. \cite{MR2247927}.  
Various properties of such dynamics, like mean, variance or other statistical properties of the particle system, can then be described by a functional $f_M: X^M \rightarrow\R$ on the state space $X^M$ of the $M-$interacting particle system.  Hence,  $f_M$ is an  observable of the particle system and hence be subject to measurements, respectively data. It is  a natural question to ask whether an approximation $\hat{f}_M$ to $f_M$ can be learned by machine learning tools. Kernel-based methods proceed in this context as follows: they generate an approximation  $\hat{f}_M$ using a weighted sum 
\begin{equation}\label{1}
	\hat{f}_M = \sum_{n=1}^N \alpha_n k_M(\cdot, \vec x_n),
\end{equation}
where $\alpha_1,\ldots,\alpha_N\in\R$ are coefficients, $\vec x_1,\ldots,\vec x_N\in[0,1]^M$ are data points and $k_M: [0,1]^M\times[0,1]^M\rightarrow\R$ is a kernel function, cf.\ e.g.\ \cite{scholkopf2001generalized} and below in Section \ref{section.examples} for further examples.   Note that the approximation $\hat{f}_M$ (and the kernel $k_M$) depends on the size of the particle system $M$ through the measurements. Clearly, the dimension of $x_i \in [0,1]$ is not restrictive and any higher dimensional state space  is possible.  The set of all  functionals represented by the series \eqref{1} forms a Hilbert space $H_M$ (see Section \ref{section.prelim} for a precise definition of reproducing kernel Hilbert spaces (RKHS)). Hence, the question is closely related to the problem of describing the Hilbert space generated by the kernel $k_M.$  
\par 
We are now interested in the limit of those approximations in the case $M\to \infty$. Convergence of functionals $\hat{f}_M$ in the case $M\to \infty$ can be established provided that the $\hat{f}_M$ are symmetric, see  \cite{cardaliaguet2010}. In this paper, we discuss whether also $k_M$ has a similar limit $k$ and whether it can be used to model functionals on the mean field level. { Existence and properties of the limiting kernel $k,$ that is a mapping $\Pb(X)\times\Pb(X) \rightarrow \R$, is a first main result, that is summarized in Theorem 3.2. Given the limit of those kernels on the space of  probability measures $\mathcal{P}(X)$ allows then to  
}
establish that in fact $\hat{f}$ is expressed through a kernel $k$. 
{ This result is given in Theorem 4.4 below and 
	may be } 
represented by 
\begin{equation}\label{2}
	\hat{f} = \sum_{n=1}^N \alpha_n k(\cdot,\mu_n),
\end{equation}
where now $\mu_n \in \mathcal{P}(X)$. { Furthermore, 
	Theorem 4.4 allows to investigate if the previous functions also form   a reproducing kernel Hilbert space $H_k$. 
}
The  diagram of Figure \ref{fig.mfl_kernel_diagram1} summarizes the obtained relations. 

\begin{figure}\center
	\tikzset{every picture/.style={line width=0.75pt}} 
	\begin{tikzpicture}[x=0.75pt,y=0.75pt,yscale=-1,xscale=1]
		
		\draw    (46.67,25.71) -- (196.54,27.12) ;
		\draw [shift={(198.54,27.14)}, rotate = 180.54] [color={rgb, 255:red, 0; green, 0; blue, 0 }  ][line width=0.75]    (10.93,-3.29) .. controls (6.95,-1.4) and (3.31,-0.3) .. (0,0) .. controls (3.31,0.3) and (6.95,1.4) .. (10.93,3.29)   ;
		\draw    (211.27,50.36) -- (211.27,102.29) ;
		\draw [shift={(211.27,104.29)}, rotate = 270] [color={rgb, 255:red, 0; green, 0; blue, 0 }  ][line width=0.75]    (10.93,-3.29) .. controls (6.95,-1.4) and (3.31,-0.3) .. (0,0) .. controls (3.31,0.3) and (6.95,1.4) .. (10.93,3.29)   ;
		\draw    (23.16,50.36) -- (23.16,102.29) ;
		\draw [shift={(23.16,104.29)}, rotate = 270] [color={rgb, 255:red, 0; green, 0; blue, 0 }  ][line width=0.75]    (10.93,-3.29) .. controls (6.95,-1.4) and (3.31,-0.3) .. (0,0) .. controls (3.31,0.3) and (6.95,1.4) .. (10.93,3.29)   ;
		\draw    (44.71,127.14) -- (195.5,127.99) ;
		\draw [shift={(197.5,128)}, rotate = 180.32] [color={rgb, 255:red, 0; green, 0; blue, 0 }  ][line width=0.75]    (10.93,-3.29) .. controls (6.95,-1.4) and (3.31,-0.3) .. (0,0) .. controls (3.31,0.3) and (6.95,1.4) .. (10.93,3.29)   ;
		
		\draw (22.67,21.43) node    {$k_M$};
		\draw (117.7,38.57) node  [font=\scriptsize,color={rgb, 255:red, 208; green, 2; blue, 27 }  ,opacity=1 ]  {$M\ \rightarrow \infty $};
		\draw (214.21,24.64) node    {$k$};
		\draw (22.67,125.71) node    {$H_{M}$};
		\draw (212.74,128.57) node    {$H_{k}$};
		\draw (117.7,10) node  [font=\scriptsize,color={rgb, 255:red, 208; green, 2; blue, 27 }  ,opacity=1 ]  {MFL of $k_M$};
		\draw (115.74,140) node  [font=\scriptsize,color={rgb, 255:red, 208; green, 2; blue, 27 }  ,opacity=1 ]  {$M\ \rightarrow \infty $};
		\draw (115.74,111.43) node  [font=\scriptsize,color={rgb, 255:red, 208; green, 2; blue, 27 }  ,opacity=1 ]  {MFL of $f_{M}$};
	\end{tikzpicture}
	\caption{Commutative diagram summarizing the relation between mean field limit (MFL) of a sequence of kernels $(k_M)_M$ and their corresponding reproducing kernel Hilbert spaces. Here, $f_M$ denotes an element of the space $H_M$, and $k,H_k$ indicate the MFL of $(k_M)_M$ and $(H_M)_M$, respectively. The mean field limits are given in Theorem 3.2 and Theorem 4.4., respectively. }
	\label{fig.mfl_kernel_diagram1}
\end{figure}

The manuscript is organized as follows.  In Section \ref{section.prelim}, we collect some background material on mean field limits as well as kernels and their reproducing kernel Hilbert spaces.  Section \ref{section.mfl_kernels} presents the appropriate conditions on sequences of kernels to allow a rigorous mean field limit, which is proven in Theorem \ref{thm.mfl_kernel}. Then,  the induced RKHS is investigated in Section \ref{section.mfl_rkhs}. Finally, two large classes of such kernel sequences are presented and analyzed in Section \ref{section.examples}, and Section \ref{section.conclusion} contains a summary and an outlook.

\section{Preliminary discussion and notation} \label{section.prelim}
Unless noted otherwise,  $(X,d_X)$ denotes a compact metric space.
Let $\Pb(X)$ be the set of Borel probability measures on $X$, which we endow with the topology of \weakstar\ convergence, i.e.,
$(\mu_n)_n\subseteq\Pb(X)$ converges to $\mu\in\Pb(X)$ iff for all continuous $\phi: X\rightarrow\R$ we have
\begin{equation*}
	\lim_{n\rightarrow\infty} \int_X \phi(x)\mathrm{d}\mu_n(x) = \int_X \phi(x)\mathrm{d}\mu(x).
\end{equation*}
It is well-known that $\Pb(X)$ is compact and can be metrized by the Kantorowich-Rubinstein distance $\dkm$, defined by
\begin{equation*}
	\dkm(\mu_1,\mu_2)= \sup\left\{ \int_X \phi(x) \mathrm{d}(\mu_1-\mu_2)(x) \mid \phi: X \rightarrow \R \text{ is 1-Lipschitz } \right\}.
\end{equation*}
We also define $\dkm^2: \Pb(X)\times\Pb(X)\rightarrow\Rnn$ by 
\begin{equation*}
	\dkm^2((\mu_1,\mu_1^\prime),(\mu_2,\mu_2^\prime))=\dkm(\mu_1,\mu_2)+\dkm(\mu_1^\prime,\mu_2^\prime),
\end{equation*}
and note that $(\Pb(X)\times\Pb(X),\dkm^2)$ is  a compact metric space. For $M\in\Np$ and $\vec x\in X^M$, denote the $i$-th component of $\vec x$ by $x_i$, and define the \defm{empirical measure} for $\vec x$ by
\begin{equation*}
	\muhat[\vec x] = \frac{1}{M}\sum_{i=1}^M \delta_{x_i},
\end{equation*}
where $\delta_x$ denotes the Dirac measure centered at $x\in X$. The  empirical measures are dense in $\Pb(X)$ under the given metric. 
{Furthermore, denote the set of permutations on $\{1,\ldots, M\}$ by $\Permutations_M$. }
\par 
In the context of this article, a {modulus of continuity} is a function $\omega:\Rnn\rightarrow\Rnn$ that is continuous, non decreasing and with $\omega(0)=0$.
Later we use that for every $R\in\Rp$ and every modulus of continuity $\omega$, we can find a concave modulus of continuity $\tilde \omega: [0,R] \rightarrow \Rnn$ such that $\omega(r)\leq\tilde\omega(r)$ for all $r\in[0,R]$. 
{We define $X^M = X \times X \dots \times X$ the metric space of $M$ copies of $(X,d_X).$ }

We  recall a result on symmetric functions of many variables, which motivates our later developments.
\begin{assumption} \label{assump.cardaliaguet2010}
	Let $f_M: X^M \rightarrow \R$, $M\in\Np$, such that
	\begin{enumerate}
		\item \emph{(Symmetry in $\vec x$)} For all $M\in\Np$, $\vec{x}\in X^M$ and permutations $\sigma\in\Permutations_M$, we have
		\begin{equation*}
			f_M(\sigma\vec{x})
			:=f(x_{\sigma(1)},\ldots,x_{\sigma(M)})
			=f(\vec{x})
		\end{equation*}
		\item \emph{(Uniform boundedness)} There exists $C_f\in\Rnn$ such that
		\begin{equation*}
			\forall M\in\Np,\vec{x} \in X^M: |f_M(\vec{x})|\leq C_f
		\end{equation*}
		\item \emph{(Uniform continuity)} There exists a modulus of continuity $\omega_f:\Rnn\rightarrow\Rnn$ such that for all $M\in\Np$, $\vec{x}_1,\vec{x}_2\in X^M$
		\begin{equation*}
			|f_M(\vec{x}_1)-f_M(\vec{x}_2)|
			\leq \omega_f\left( \dkm(\muhat[\vec{x}_1],\muhat[\vec{x}_2]) \right)
		\end{equation*}
	\end{enumerate}
\end{assumption}
Note that in Assumption \ref{assump.cardaliaguet2010}, the symmetry in $\vec x$ for $f_M$ is actually implied by the uniform continuity, cf. \cite[Remark~1.3]{carmona2018probabilistic}.
Furthermore, this latter property also implies continuity with respect to the product metric on $X^M$. The following result will also be used to establish the mean field convergence and is repeated here for convenience \cite[Theorem~2.1]{cardaliaguet2010}.
\begin{theorem} \label{thm.mfl_funcs}
	Under Assumption \ref{assump.cardaliaguet2010}, there exists a subsequence $(f_{M_\ell})_\ell$ and some $f: \Pb(X)\rightarrow\Rnn$, such that
	\begin{equation*}
		\lim_{\ell\rightarrow\infty} \sup_{\vec x\in X^{M_\ell}} |f_{M_\ell}(\vec x) - f(\muhat[\vec x])| = 0.
	\end{equation*}
	Furthermore, $f$ is continuous as function on $\Pb(X)$ and (uniformly) bounded by $C_f$.
\end{theorem}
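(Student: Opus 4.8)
The plan is to reduce the problem to a standard Arzel\`a--Ascoli argument on the compact metric space $\Pb(X)$. First I would use the symmetry of $f_M$ to pass from functions on $X^M$ to functions on empirical measures: for each $M$ let $E_M := \{\muhat[\vec x] : \vec x \in X^M\} \subseteq \Pb(X)$ be the set of $M$-atom empirical measures, and define $g_M : E_M \to \R$ by $g_M(\muhat[\vec x]) := f_M(\vec x)$. Symmetry (Assumption item 1) guarantees that this is well defined, since $\muhat[\vec x] = \muhat[\sigma \vec x]$ for every $\sigma \in \Permutations_M$. The uniform boundedness then reads $|g_M| \le C_f$ on $E_M$, and the uniform continuity becomes $|g_M(\nu_1) - g_M(\nu_2)| \le \omega_f(\dkm(\nu_1,\nu_2))$ for all $\nu_1,\nu_2 \in E_M$; that is, the family $(g_M)_M$ is uniformly bounded and shares a single modulus of continuity with respect to $\dkm$.

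Next, so that Arzel\`a--Ascoli can be applied to functions on one fixed domain, I would extend each $g_M$ from $E_M$ to all of $\Pb(X)$ without enlarging its modulus. Using the remark that on the bounded interval $[0,R]$ with $R := \mathrm{diam}(\Pb(X)) < \infty$ there is a concave modulus of continuity $\tw$ dominating $\omega_f$, define the inf-convolution
\begin{equation*}
	\tilde g_M(\mu) := \inf_{\nu \in E_M} \bigl[ g_M(\nu) + \tw(\dkm(\mu,\nu)) \bigr], \qquad \mu \in \Pb(X).
\end{equation*}
Concavity of $\tw$ (hence subadditivity) yields that $\tilde g_M$ has modulus $\tw$ on all of $\Pb(X)$, and the inequality $\omega_f \le \tw$ together with the original modulus bound shows $\tilde g_M = g_M$ on $E_M$; moreover $-C_f \le \tilde g_M \le C_f + \tw(R)$. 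Thus $(\tilde g_M)_M$ is a uniformly bounded, equicontinuous family of functions on the compact space $(\Pb(X), \dkm)$.

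By the Arzel\`a--Ascoli theorem there exist a subsequence $(M_\ell)_\ell$ and a continuous $f : \Pb(X) \to \R$ with $\sup_{\mu \in \Pb(X)} |\tilde g_{M_\ell}(\mu) - f(\mu)| \to 0$. Restricting the supremum to $E_{M_\ell}$, where $\tilde g_{M_\ell} = g_{M_\ell}$, and unfolding the definition of $g_{M_\ell}$ gives exactly
\begin{equation*}
	\sup_{\vec x \in X^{M_\ell}} |f_{M_\ell}(\vec x) - f(\muhat[\vec x])| = \sup_{\nu \in E_{M_\ell}} |g_{M_\ell}(\nu) - f(\nu)| \le \sup_{\mu \in \Pb(X)} |\tilde g_{M_\ell}(\mu) - f(\mu)| \longrightarrow 0.
\end{equation*}
Continuity of $f$ is immediate from Arzel\`a--Ascoli. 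For the bound $|f| \le C_f$ I would fix $\mu \in \Pb(X)$, pick $\nu_\ell \in E_{M_\ell}$ with $\dkm(\mu,\nu_\ell) \to 0$ (possible since $M$-atom empirical measures become dense as $M \to \infty$), and pass to the limit in $-C_f \le \tilde g_{M_\ell}(\mu) \le C_f + \tw(\dkm(\mu,\nu_\ell))$ using $\tw(0)=0$.

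I expect the main obstacle to be the extension step: the functions $g_M$ live on the different, non-nested domains $E_M$, so a direct diagonal argument over a fixed dense set is awkward. The device that resolves this is the concave (subadditive) modulus $\tw$, which is precisely what makes the inf-convolution extension both modulus-preserving and equal to $g_M$ on $E_M$; everything else is the routine compactness machinery. If one additionally assumes $f_M \ge 0$, then $f \ge 0$ follows by the same limiting argument, matching the stated codomain $\Rnn$.
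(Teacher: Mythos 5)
Your proof is correct and follows essentially the same route as the paper: the McKean--McShane inf-convolution extension with a concave dominating modulus, followed by Arzel\`a--Ascoli on the compact space $(\Pb(X),\dkm)$ and restriction back to empirical measures, which is exactly the argument of \cite[Theorem~2.1]{cardaliaguet2010} that the paper cites here and reproduces in its proof of Theorem \ref{thm.mfl_kernel}. Your closing observations (the density argument for the bound $|f|\leq C_f$, and that the codomain $\Rnn$ in the statement should really be $\R$ absent a sign assumption on $f_M$) are both apt.
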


\subsection{Reproducing kernel Hilbert spaces} \label{section.rkhs}
Concepts and results on reproducing kernel Hilbert spaces are recalled for the convenience of the reader. This presentation follows closely \cite[Chapter~4]{steinwart2008support}, where also further aspects are detailed. 

{ 
	\begin{definition} Let $\mathcal{X} \not = \emptyset$ be an arbitrary set and $H \subseteq \R^\mathcal{X}$ a real Hilbert space of functions. Then, the function $k: \mathcal{X} \times \mathcal{X} \rightarrow \R$ is called a kernel, if  there exists a real Hilbert space $\mathcal{H}$ and a map $\Phi: \mathcal{X} \rightarrow \mathcal{H}$ 
		\begin{equation*}
			k(x,x') = \langle \Phi(x'), \Phi(x) \rangle_\mathcal{H} \forall x,x'\in \mathcal{X}.
		\end{equation*}
		The map $\Phi$ is called feature map and $\mathcal{H}$ is the  \defm{feature space}.  
		\par 
		The mapping $k$ is called a \defm{reproducing kernel} (for $H$) if 
		$$k(\cdot,x)\in H \; \forall x\in \mathcal{X}$$ and if $$f(x)=\langle f, k(\cdot, x)\rangle_H \; \forall f\in H, x \in \mathcal{X}.$$
		If a kernel $k$ has the previous property, than $k$ is said to have  the \defm{reproducing property}.
		\par 
		$H$ is called a \defm{reproducing kernel Hilbert space} if all evaluation functionals are continuous, i.e., if 
		for all $x \in \mathcal{X}$ the functionals $$\delta_x: H \rightarrow \R, \; \delta_x(f)=f(x)$$ are bounded.
		\par
		Finally, we say that $k$ is positive definite\footnote{In the literature this is sometimes called positive semi-definiteness.}  if for all $N\in\mathbb{N}$, $x_1,\ldots,x_N\in\mathcal{X}$ and $\alpha_1,\ldots,\alpha_N\in\mathbb{R}$ we have
		\begin{equation*}
			\sum_{i,j=1}^N \alpha_i \alpha_j k(x_j,x_i) \geq 0.
		\end{equation*}
	\end{definition} 
}

For convenience, we now recall some well-known properties on RKHS and their kernels.
\begin{enumerate}
	\item If $H$ has a reproducing kernel, then it is a RKHS.
	\item Every RKHS has a unique reproducing kernel.
	\item If $k$ is a reproducing kernel for $H$, then $k$ is a kernel and $\Phi_k: \mathcal{X}\rightarrow H$, $\Phi_k(x)=k(\cdot,x)$ is a feature map (called \defm{canonical feature map}) and $H$ is a feature space for $k$.
	\item $k$ is a kernel if and only if it is symmetric and positive definite.
	\item Every kernel $k$ has a unique RKHS for which it is a reproducing kernel. We denote this RKHS by $H_k$, its associated scalar product by $\langle \cdot, \cdot,\rangle_k$ (or just $\langle \cdot, \cdot\rangle$ if $k$ is clear from context) and the induced norm by $\|\cdot\|_k$.
	\item The pre-Hilbert space
	\begin{equation*}
		\begin{split}
			H_\text{pre} & = \text{span} \{ k(\cdot,x) \mid x \in \mathcal{X} \}
			\\
			& = \left\{ \sum_{n=1}^N \alpha_n k(\cdot, x_n) \mid N\in\mathbb{N},\: \alpha_n \in\mathbb{R}, x_n \in \mathcal{X}, n=1,\ldots,N \right\}
		\end{split}
	\end{equation*}
	with the inner product
	\begin{equation*}
		\left\langle \sum_{n=1}^N \alpha_n k(\cdot,x_n), \sum_{m=1}^M \beta_m k(\cdot,y_m) \right\rangle
		= \sum_{n=1}^N \sum_{m=1}^M \alpha_n \beta_m k(y_m, x_n)
	\end{equation*}
	is dense in the (unique) RKHS $H_k$ for kernel $k$.
\end{enumerate}
Additionally, to every kernel $k: X \times X \rightarrow \R$ we can associated the \defm{kernel metric} induced by $k$,
\begin{equation*}
	d_k: X \times X \rightarrow \Rnn, \: d_k(x,x')=\|\Phi_{k}(x)-\Phi_{k}(x')\|_{k} = \sqrt{k(x,x) - 2k(x,x') + k(x',x')}.
\end{equation*}
The kernel metric $d_k$ is always a pseudometric on $X$, even if $X$ { has} no structure by itself, and a metric on $X$ if $\Phi_k$ is injective.

Furthermore, in Section \ref{section.examples} we need the notion of \defm{kernel mean embeddings} (KME) of distributions, see \cite{muandet2017kernel}.
Let $X$ be a compact metric space and $k: X \times X \rightarrow \Rnn$ a continuous and bounded kernel on $X$.
Then $x \mapsto k(\cdot,x)$ is measurable and Bochner integrable for every Borel probability measure $\mu\in\Pb(X)$.
{
	\begin{definition}
		We define the kernel mean embedding of $\mu$ into $H_k$ by
		\begin{equation*}
			f^k_\mu = \int k(\cdot, x) \mathrm{d}\mu(x).
		\end{equation*}
		If the map $\Pb(X)\rightarrow H_k$, $\mu \mapsto f_\mu^k$ is injective, we call $k$ \defm{characteristic}.
	\end{definition}
}

\section{The mean field limit of kernels} \label{section.mfl_kernels}
In this section, we investigate the mean field limit of sequences of kernels. In order to show the dependence of the kernel on the  dimension $M$, we use an upper index. Let $X$ be as in the previous section and consider now a sequence 

$$k^{[M]}: X^M \times X^M \rightarrow \R,  \; M\in\N_+,$$  of kernels on input space $X^M$ where we impose the following assumptions.

\begin{assumption} \label{assump.mfl_kernel_cond}
	\begin{enumerate}
		\item \emph{(Symmetry in $\vec x$)} For all $M\in\N_+$, $\vec{x},\vec{x}^\prime\in X^M$ and permutations $\sigma\in \Permutations_M$ we have
		\begin{equation*}
			k^{[M]}(\sigma\vec{x},\vec{x}^\prime)
			:=k^{[M]}((x_{\sigma(1)},\ldots,x_{\sigma(M)}),\vec{x}^\prime)
			=k^{[M]}(\vec{x},\vec{x}^\prime)
		\end{equation*}
		\item \emph{(Uniform boundedness)} There exists $C_k\in\Rnn$ such that
		\begin{equation*}
			\forall M\in\N_+,\vec{x},\vec{x}^\prime \in X^M: |k^{[M]}(\vec{x},\vec{x}^\prime)|\leq C_k
		\end{equation*}
		\item \emph{(Uniform continuity)} There exists a modulus of continuity $\omega_k:\Rnn\rightarrow\Rnn$ such that for all $M\in\N_+$, $\vec{x}_1,\vec{x}_1^\prime,\vec{x}_2,\vec{x}_2'\in X^M$
		\begin{equation*}
			|k^{[M]}(\vec{x}_1,\vec{x}_1^\prime)-k^{[M]}(\vec{x}_2,\vec{x}_2^\prime)|
			\leq \omega_k\left( \dkm^2\left[ (\muhat[\vec{x}_1],\muhat[\vec{x}_1^\prime),  (\muhat[\vec{x}_2],\muhat[\vec{x}_2^\prime)\right] \right)
		\end{equation*}
	\end{enumerate}
\end{assumption}
%
%
The next theorem extends the proof in  \cite[Theorem~2.1]{cardaliaguet2010} and shows that, if a sequence of kernels fulfills Assumption \ref{assump.mfl_kernel_cond}, then there exists the mean field limit, which is again a {\em kernel.}
\begin{theorem} \label{thm.mfl_kernel}
	Under Assumption \ref{assump.mfl_kernel_cond}, there exists a subsequence $(k^{[M_\ell]})_{\ell}$ and a continuous, bounded kernel $k: \Pb(X) \times \Pb(X) \rightarrow \R$ such that
	\begin{equation} \label{eq.conv_kernels}
		\lim_{ {\ell} \rightarrow \infty} \sup_{\vec{x},\vec{x}^\prime\in X^{M_{\ell}}} 
		|k^{[M_{\ell}]}(\vec{x},\vec{x}^\prime)-k(\muhat[\vec{x}],\muhat[\vec{x}^\prime])| = 0.
	\end{equation}
\end{theorem}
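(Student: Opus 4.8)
The plan is to reduce the statement to an Arzel\`a--Ascoli compactness argument on the compact metric space $(\Pb(X)\times\Pb(X),\dkm^2)$, following the strategy of \cite[Theorem~2.1]{cardaliaguet2010} but additionally tracking the fact that the limit object must itself be a kernel. First I would pass from the functions $k^{[M]}$ on $X^M\times X^M$ to functions $\kappa_M$ defined on pairs of $M$-atomic empirical measures by $\kappa_M(\muhat[\vec x],\muhat[\vec x^\prime])=k^{[M]}(\vec x,\vec x^\prime)$. The first point to verify is well-definedness: since $\muhat[\vec x]=\muhat[\vec y]$ forces $\vec y=\sigma\vec x$ for some $\sigma\in\Permutations_M$, invariance under permutations of \emph{both} slots is required. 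Symmetry in the first slot is Assumption~\ref{assump.mfl_kernel_cond}(1), while symmetry in the second slot follows by combining it with the symmetry $k^{[M]}(\vec x,\vec x^\prime)=k^{[M]}(\vec x^\prime,\vec x)$ that every kernel enjoys. By Assumption~\ref{assump.mfl_kernel_cond}(2)--(3), the family $(\kappa_M)_M$ is then uniformly bounded by $C_k$ and shares the modulus of continuity $\omega_k$ with respect to $\dkm^2$.

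Next I would extend each $\kappa_M$ to all of $\Pb(X)\times\Pb(X)$ without enlarging its modulus of continuity. Replacing $\omega_k$ by a concave majorant $\tw_k$ on the finite diameter of $\Pb(X)\times\Pb(X)$, the McShane-type formula
\begin{equation*}
	G_M(\mu,\mu^\prime)=\inf_{\vec x,\vec x^\prime\in X^M}\Big[\,k^{[M]}(\vec x,\vec x^\prime)+\tw_k\big(\dkm^2((\mu,\mu^\prime),(\muhat[\vec x],\muhat[\vec x^\prime]))\big)\Big]
\end{equation*}
defines a continuous extension with $G_M=\kappa_M$ on pairs of empirical measures, still bounded by $C_k$ and still equicontinuous with modulus $\tw_k$. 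Since $(\Pb(X)\times\Pb(X),\dkm^2)$ is compact, Arzel\`a--Ascoli yields a subsequence $(G_{M_\ell})_\ell$ converging uniformly to a continuous, $C_k$-bounded limit $k:\Pb(X)\times\Pb(X)\rightarrow\R$. Restricting $\|G_{M_\ell}-k\|_\infty\to 0$ to pairs of empirical measures and using $G_{M_\ell}(\muhat[\vec x],\muhat[\vec x^\prime])=k^{[M_\ell]}(\vec x,\vec x^\prime)$ gives exactly \eqref{eq.conv_kernels}.

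It remains to prove that $k$ is a kernel, i.e.\ symmetric and positive definite; this is the part that goes beyond the scalar result and is where I expect the real work. Symmetry is easy: $k^{[M]}(\vec x,\vec x^\prime)=k^{[M]}(\vec x^\prime,\vec x)$ descends to $k(\mu,\mu^\prime)=k(\mu^\prime,\mu)$ on empirical measures, and then to all of $\Pb(X)\times\Pb(X)$ by density of empirical measures and continuity of $k$. For positive definiteness, fix $\mu_1,\dots,\mu_N\in\Pb(X)$ and $\alpha_1,\dots,\alpha_N\in\R$. The obstacle is that positive definiteness of $k^{[M]}$ is a statement about finitely many points living in a \emph{common} space $X^M$, so I would first approximate, as $\ell\to\infty$, every $\mu_i$ simultaneously by an $M_\ell$-atomic empirical measure $\muhat[\vec x_i^{(\ell)}]\to\mu_i$ with $\vec x_i^{(\ell)}\in X^{M_\ell}$ (possible since such empirical measures are dense and $M_\ell\to\infty$). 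Positive definiteness of $k^{[M_\ell]}$ gives $\sum_{i,j}\alpha_i\alpha_j k^{[M_\ell]}(\vec x_j^{(\ell)},\vec x_i^{(\ell)})\geq 0$, and rewriting each summand as $G_{M_\ell}(\muhat[\vec x_j^{(\ell)}],\muhat[\vec x_i^{(\ell)}])$, the uniform convergence $G_{M_\ell}\to k$ together with continuity of $k$ lets me pass to the limit and obtain $\sum_{i,j}\alpha_i\alpha_j k(\mu_j,\mu_i)\geq 0$. Hence $k$ is symmetric and positive definite, and by the standard characterization of kernels recalled in Section~\ref{section.rkhs} it is a kernel, which completes the proof.
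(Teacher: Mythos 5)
Your proposal is correct and follows essentially the same route as the paper's proof: the identical McShane-type extension with a concave majorant modulus (the paper's ``McKean extension''), Arzel\`a--Ascoli on the compact space $(\Pb(X)\times\Pb(X),\dkm^2)$, restriction to empirical measures for \eqref{eq.conv_kernels}, and the same density-plus-continuity argument for symmetry and positive definiteness. The only slip is cosmetic: for fixed $M$ the extension $G_M$ is uniformly bounded by $C_k+\tw_k(2D_{\Pb(X)})$ rather than by $C_k$ (the infimum cannot be taken at distance zero, since $M$-atomic empirical measures are not dense for fixed $M$), but equiboundedness---and hence the whole argument---is unaffected.
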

{ Note that the theorem states the existence of a limiting kernel $k$, independent of $M$. The mapping $k:\Pb(X) \times \Pb(X) \rightarrow \R$ is called a kernel over the probability space $\Pb(X).$ It fulfills the following properties that will be established in the proof below: 
	\begin{itemize}
		\item $k$ is symmetric and positive definite on $\Pb(X) \times \Pb(X)$
		\item $k$ is bounded on $\Pb(X) \times \Pb(X)$
	\end{itemize} 
	
}

The first part of the proof is based on the same arguments as in  \cite[Theorem~2.1]{cardaliaguet2010} and repeated only for convenience.
\begin{proof}
	{ We construct a sequence of uniformly bounded and equi--continuous kernels $k^{[M]}_\McS$ for $M \in \mathbb{N}_+.$ 
		Its limit will be the desired kernel $k.$ 
	}
	\par 
	\textbf{Step 1.}
	{In the first step we define $k^{[M]}_\McS$ and show that it is bounded  on $\Pb(X)\times\Pb(X)$ and coincides with the kernel $k^{[M]}$ on $X^M \times X^M$. }  
	Since $\Pb(X)$ is compact, it has a finite diameter $D_{\Pb(X)}\in\Rnn$. Let $\tw_k:[0,2D_{\Pb(X)}]\rightarrow\Rnn$ be a modulus of continuity, that is a pointwise upper bound to $\omega_k$.
	For all $M\in\Np$, define now the McKean extension $k^{[M]}_\McS: \Pb(X)\times\Pb(X)\rightarrow \R$ by
	\begin{align*}
		k^{[M]}_\McS(\mu,\mu^\prime):= \inf_{\vec{x},\vec{x}^\prime\in X^M} k^{[M]}(\vec{x},\vec{x}^\prime) + \tw_k\left( \dkm^2 \left[ (\muhat[\vec{x}], \muhat[\vec{x}^\prime]), (\mu,\mu^\prime) \right] \right) {.}
	\end{align*}
	
	Note that for all $M\in\N_+$, $k^{[M]}_\McS$ is well-defined.
	For this, we show that 
	\\
	$\dkm^2 \left[ (\muhat[\vec{x}], \muhat[\vec{x}^\prime]), (\mu,\mu^\prime) \right] $ belongs to the domain of $\tw_k$. This holds true, since 
	\begin{equation*}
		\dkm^2 \left[ (\muhat[\vec{x}], \muhat[\vec{x}^\prime]), (\mu,\mu^\prime) \right] 
		\leq \dkm(\muhat[\vec{x}], \mu) +  \dkm(\muhat[\vec{x}^\prime], \mu^\prime) \leq 2D_{\Pb(X)}.
	\end{equation*}
	Second, we show that  $k^{[M]}_\McS(\mu,\mu^\prime)$ {is bounded.}
	Since $X$ and hence $\Pb(X)$ are non-empty, we have $k^{[M]}_\McS(\mu,\mu^\prime)<\infty$.
	The uniform continuity assumption on $k^{[M]}$ implies that all kernels are continuous as functions on $X^{2M}$ and therefore (recall that $\tw_k\geq 0$)
	\begin{equation*}
		k^{[M]}_\McS(\mu,\mu^\prime) \geq \inf_{\vec{x},\vec{x}^\prime\in X^M} k^{[M]}(\vec{x},\vec{x}^\prime) > -\infty
	\end{equation*}
	by compactness of $X^M \times X^M$.
	
	Furthermore, observe that for all $M\in\N_+$ and $\vec{x},\vec{x}^\prime\in X^M$, we have
	\begin{equation} \label{eq.mcshane_atomic}
		k^{[M]}_\McS(\muhat[\vec{x}],\muhat[\vec{x}^\prime])=k^{[M]}(\vec{x},\vec{x}^\prime) .
	\end{equation}
	For arbitrary $\vec x,\vec x'\in X^M$ it holds by  construction
	\begin{align*}
		k^{[M]}_\McS(\muhat[\vec{x}],\muhat[\vec{x}^\prime])
		& \leq k^{[M]}(\vec{x},\vec{x}^\prime) + \tw_k\left( \dkm^2 \left[ (\muhat[\vec{x}], \muhat[\vec{x}^\prime]), (\muhat[\vec{x}], \muhat[\vec{x}^\prime]) \right] \right) \\
		& = k^{[M]}(\vec{x},\vec{x}^\prime).
	\end{align*}
	Let additionally $\vec{x}_1,\vec{x}_1'\in X^M$ be arbitrary, then we obtain
	\begin{align*}
		& k^{[M]}(\vec{x}_1,\vec{x}_1^\prime) 
		+ \tw_k\left( \dkm^2 \left[ 
		(\muhat[\vec{x}_1], \muhat[\vec{x}_1^\prime]),  
		(\muhat[\vec{x}], \muhat[\vec{x}^\prime]) 
		\right] \right) \\
		& \geq k^{[M]}(\vec{x},\vec{x}^\prime) 
		- |k^{[M]}(\vec{x}_1,\vec{x}_1^\prime) - k^{[M]}(\vec{x},\vec{x}^\prime)| 
		\\
		& \quad + \tw_k\left( \dkm^2 \left[ 
		(\muhat[\vec{x}_1], \muhat[\vec{x}_1^\prime]),  
		(\muhat[\vec{x}], \muhat[\vec{x}^\prime]) 
		\right] \right) \\
		& \geq k^{[M]}(\vec{x},\vec{x}^\prime) 
		- \omega_k\left( \dkm^2 \left[ 
		(\muhat[\vec{x}_1], \muhat[\vec{x}_1^\prime]),  
		(\muhat[\vec{x}], \muhat[\vec{x}^\prime]) 
		\right] \right) 
		\\
		& \quad + \tw_k\left( \dkm^2 \left[ 
		(\muhat[\vec{x}_1], \muhat[\vec{x}_1^\prime]),  
		(\muhat[\vec{x}], \muhat[\vec{x}^\prime]) 
		\right] \right) \\
		& \geq k^{[M]}(\vec{x},\vec{x}^\prime) 
		- \tw_k\left( \dkm^2 \left[ 
		(\muhat[\vec{x}_1], \muhat[\vec{x}_1^\prime]),  
		(\muhat[\vec{x}], \muhat[\vec{x}^\prime])
		\right] \right) 
		\\
		& \quad + \tw_k\left( \dkm^2 \left[ 
		(\muhat[\vec{x}_1], \muhat[\vec{x}_1^\prime]),  
		(\muhat[\vec{x}], \muhat[\vec{x}^\prime]) 
		\right] \right) \\
		& =  k^{[M]}(\vec{x},\vec{x}^\prime),
	\end{align*}
	where we used the uniform continuity of $k^{[M]}$ in the second inequality and the definition of $\tw_k$ (together with $\dkm^2 \left[ (\muhat[\vec{x}_1], \muhat[\vec{x}_1^\prime]),  (\muhat[\vec{x}], \muhat[\vec{x}^\prime]) \right] \leq 2D_{\Pb{X}}$) in the third inequality. This implies that
	$k^{[M]}_\McS(\muhat[\vec{x}],\muhat[\vec{x}^\prime]) \geq  k^{[M]}(\vec{x},\vec{x}^\prime)$.
	
	\textbf{Step 2} We now show equi-boundedness of $(k_\McS^{[M]})_M$. 
	Let $M\in\N_+$ and $\mu,\mu^\prime\in\Pb(X)$ be arbitrary, then
	\begin{align*}
		|k^{[M]}_\McS(\mu,\mu^\prime)| 
		& = \left| \inf_{\vec{x},\vec{x}^\prime\in X^M} k^{[M]}(\vec{x},\vec{x}^\prime) + \tw_k\left( \dkm^2 \left[ (\muhat[\vec{x}], \muhat[\vec{x}^\prime]), (\mu,\mu^\prime) \right] \right) \right| \\
		& \leq \inf_{\vec{x},\vec{x}^\prime\in X^M} 
		| k^{[M]}(\vec{x},\vec{x}^\prime)| + \left|\tw_k\left( \dkm^2 \left[ (\muhat[\vec{x}], \muhat[\vec{x}^\prime]), (\mu,\mu^\prime) \right] \right)\right| \\
		& \leq  C_k + \tw_k(2D_{\Pb(X)}) =: \tilde{C}_k,
	\end{align*}
	where we used the uniform boundedness of $k^{[M]}$ and the compactness  of $\Pb(X)$.
	
	\textbf{Step 3} Next, we show that $\tw_k$ is a modulus of continuity, i.e., 
	for all $M\in\N_+$, $\mu_1,\mu_1^\prime,\mu_2,\mu_2^\prime\in\Pb(X)$ we have
	\begin{equation*}
		|k^{[M]}_\McS(\mu_1,\mu_1^\prime) - k^{[M]}_\McS(\mu_2,\mu_2^\prime)| \leq \tw_k(\dkm^2[(\mu_1,\mu_1^\prime),(\mu_2,\mu_2^\prime)]).
	\end{equation*}
	To establish this, let $M\in\N_+$, $\mu_1,\mu_1^\prime,\mu_2,\mu_2^\prime\in\Pb(X)$ and $\epsilon >0$ be arbitrary.
	Now, let $(\vec{x}_2,\vec{x}_2^\prime)\in X^{2M}$ be  $\epsilon$-close, i.e.,
	\begin{equation*}
		k^{[M]}(\vec{x}_2,\vec{x}_2^\prime) + \tw_k\left( \dkm^2 \left[ (\muhat[\vec{x}_2], \muhat[\vec{x}_2^\prime]), (\mu_2,\mu_2^\prime) \right] \right) \leq k^{[M]}_\McS(\mu_2,\mu_2^\prime) + \epsilon.
	\end{equation*}
	Then, it holds
	\begin{align*}
		k^{[M]}_\McS(\mu_1,\mu_1^\prime) &  \leq k^{[M]}(\vec{x}_2,\vec{x}_2^\prime) 
		+ \tw_k\left( \dkm^2 \left[ 
		(\muhat[\vec{x}_2], \muhat[\vec{x}_2^\prime]), (\mu_1,\mu_1^\prime)
		\right] \right) \\
		& = k^{[M]}(\vec{x}_2,\vec{x}_2^\prime) 
		+ \tw_k\left( \dkm^2 \left[ 
		(\muhat[\vec{x}_2], \muhat[\vec{x}_2^\prime]), (\mu_2,\mu_2^\prime)
		\right] \right) \\
		& \quad -  \tw_k\left( \dkm^2 \left[ 
		(\muhat[\vec{x}_2], \muhat[\vec{x}_2^\prime]), (\mu_2,\mu_2^\prime)
		\right] \right)
		+ \tw_k\left( \dkm^2 \left[ 
		(\muhat[\vec{x}_2], \muhat[\vec{x}_2^\prime]), (\mu_1,\mu_1^\prime)
		\right] \right) \\
		& \leq k^{[M]}_\McS(\mu_2,\mu_2^\prime) + \epsilon
		-  \tw_k\left( \dkm^2 \left[ 
		(\muhat[\vec{x}_2], \muhat[\vec{x}_2^\prime]), (\mu_2,\mu_2^\prime)
		\right] \right)
		\\
		& \quad + \tw_k\left( \dkm^2 \left[ 
		(\muhat[\vec{x}_2], \muhat[\vec{x}_2^\prime]), (\mu_1,\mu_1^\prime)
		\right] \right) \\
		& \leq k^{[M]}_\McS(\mu_2,\mu_2^\prime) 
		+ \epsilon
		-  \tw_k\left( \dkm^2 \left[ 
		(\muhat[\vec{x}_2], \muhat[\vec{x}_2^\prime]), (\mu_2,\mu_2^\prime)
		\right] \right) \\
		& \quad + \tw_k\left( \dkm^2 \left[ 
		(\muhat[\vec{x}_2], \muhat[\vec{x}_2^\prime]), (\mu_2,\mu_2^\prime)
		\right] 
		+
		\dkm^2\left[
		(\mu_2,\mu_2^\prime), (\mu_1,\mu_1^\prime)
		\right]
		\right) \\
		& \leq k^{[M]}_\McS(\mu_2,\mu_2^\prime) 
		+ \epsilon
		+  \tw_k\left( \dkm^2\left[(\mu_2,\mu_2^\prime), (\mu_1,\mu_1^\prime) \right] \right),
	\end{align*}
	where we used the definition of $k^{[M]}_\McS(\mu_1,\mu_1^\prime)$ in the first inequality,
	the choice of $(\vec{x}_2,\vec{x}_2^\prime)$ in the second inequality,
	the triangle inequality for $\dkm$ together with the monotonicity of $\tw_k$ in the third inequality
	and finally the subadditivity. 
	Repeating these steps with the roles interchanged shows that
	\begin{equation*}
		|k^{[M]}_\McS(\mu_1,\mu_1^\prime) + k^{[M]}_\McS(\mu_2,\mu_2^\prime)| \leq \tw_k(\dkm^2\left[(\mu_1,\mu_1^\prime),(\mu_2,\mu_2^\prime)\right]) + \epsilon
	\end{equation*}
	and since $\epsilon>0$ was arbitrary and $\tw_k$ does not depend on $M$, the claim follows.
	
	\textbf{Step 4} 
	Summarizing, $(k^{[M]}_\McS)_{M\in\N_+} \subseteq C^0(\Pb(X)\times\Pb(X),\R)$ is a uniformly bounded, equi-continuous sequence. The Arzela-Ascoli theorem  guarantees existence of $k\in C^0(\Pb(X)\times\Pb(X),\R)$ and an unbounded sequence $(M_{\ell})_{ {\ell} \in\N_+}$ such that
	\begin{equation*}
		\lim_{ {\ell} \rightarrow \infty} \sup_{\mu,\mu^\prime \in \Pb(X)} |k^{[M_{\ell}]}_\McS(\mu,\mu^\prime)-k(\mu,\mu^\prime)| = 0.
	\end{equation*}
	This implies also  \eqref{eq.conv_kernels}. 
	To prove this, note that for all ${\ell}\in\N_+$ and $\vec{x}\in X^{M_{\ell}}$ we have $\muhat[\vec{x}]\in \Pb(X)$, and hence
	\begin{align*}
		& \lim_{{\ell}\rightarrow \infty} \sup_{\vec{x},\vec{x}^\prime\in X^{M_{\ell}}} 
		|k^{[M_{\ell}]}(\vec{x},\vec{x}^\prime)-k(\muhat[\vec{x}],\muhat[\vec{x}^\prime])|
		\\
		& = \lim_{{\ell}\rightarrow \infty} \sup_{\vec{x},\vec{x}^\prime\in X^{M_{\ell}}} 
		|k^{[M_{\ell}]}_\McS(\muhat[\vec{x}],\muhat[\vec{x}^\prime])-k(\muhat[\vec{x}],\muhat[\vec{x}^\prime])| \\
		& \leq  \lim_{{\ell}\rightarrow \infty} \sup_{\mu,\mu'\in\Pb(X)} |k^{[M_{\ell}]}(\mu,\mu')-k(\mu,\mu')|\\
		& = 0,
	\end{align*}
	where we used \eqref{eq.mcshane_atomic} in the first equality.
	
	\textbf{Step 5} Next, we show that for all $\mu_1,\mu_2\in\Pb(X)$, $|k(\mu_1,\mu_2)|\leq C_k$, i.e., {the function} $k$ is bounded.
	For this, let $\mu_1,\mu_2\in\Pb(X)$ and $\epsilon>0$ be arbitary. 
	Choose $n\in\N_+$ such that
	\begin{equation*}
		\|k^{[M_n]}_\McS - k\|_\infty = \sup_{\mu,\mu^\prime\in\Pb(X)}|k^{[M_n]}_\McS(\mu,\mu^\prime)-k(\mu,\mu^\prime)| \leq \epsilon.
	\end{equation*}
	We then have
	\begin{equation*}
		|k(\mu_1,\mu_2)| \leq |k(\mu_1,\mu_2) - k^{[M_n]}_\McS(\mu_1,\mu_2)| + |k^{[M_n]}_\McS(\mu_1,\mu_2)| 
		\leq \epsilon + C_k,
	\end{equation*}
	due to the uniform boundedness of $k^{[M_n]}$. Since $\epsilon>0$ was arbitrary, the claim follows.
	
	\textbf{Step 6} Finally, we show that $k$ is a kernel, i.e.,  $k$ is a symmetric and positive definite function on~$\Pb(X).$
	
	\emph{Symmetry} Let $\mu,\mu^\prime\in\Pb(X)$ and $\vec{x}_M,\vec{x}_M^\prime\in X^M$ such that
	\\
	$\dkm(\muhat[\vec{x}_M],\mu),\dkm(\muhat[\vec{x}_M^\prime],\mu^\prime)\rightarrow 0$.
	For convenience, define $\muhat_{\ell}=\muhat[\vec{x}_{M_{\ell}}]$ and $\muhat_{\ell}^\prime=\muhat[\vec{x}^\prime_{M_{\ell}}]$
	We then have
	\begin{align*}
		|k(\mu,\mu^\prime)-k(\mu^\prime,\mu)| 
		& \leq |k(\mu,\mu^\prime)-k(\muhat_{\ell},\muhat_{\ell}^\prime)| \\
		& \quad + |k(\muhat_{\ell},\muhat_{\ell}^\prime) - k^{[M_{\ell}]}(\vec{x}_{M_{\ell}}, \vec{x}^\prime_{M_{\ell}})| \\
		& \quad + |k^{[M_{\ell}]}(\vec{x}_{M_{\ell}}^\prime, \vec{x}_{M_{\ell}}) - k(\muhat_k^\prime,\muhat_k)| \\
		& \quad  + |k(\muhat_{\ell}^\prime,\muhat_{\ell})-k(\mu^\prime,\mu)| \\
		& \rightarrow 0,
	\end{align*}
	where we used the symmetry of $k^{[M_{\ell}]}$ in the inequality and then the continuity of $k$ (w.r.t. $\dkm^2$) as well as \eqref{eq.conv_kernels}.
	
	\emph{Positive definiteness} Let $N\in\N_+$, $\mathbf{\alpha}\in\R^N$ and $\mu_1,\ldots,\mu_N\in\Pb(X)$ as well as $\vec{x}_n^{[M]}\in X^M$ such that for all $n=1,\ldots,N$, $\dkm(\muhat[\vec{x}_n^{[M]}],\mu_n)\rightarrow 0$.
	For convenience, define $\muhat^{[M]}_n=\muhat[\vec{x}_n^{[M]}]$.
	Let $\epsilon>0$ be arbitrary. For all $i,j=1,\ldots,N$ and $M$ we have
	\begin{align*}
		k(\mu_i,\mu_j) & \geq k(\muhat_i^{[M]},\muhat_j^{[M]}) - |k(\mu_i,\mu_j) - k(\muhat_i^{[M]},\muhat_j^{[M]})| \\
		& \geq k^{[M]}(\vec{x}_i^{[M]},\vec{x}_j^{[M]}) - |k(\muhat_i^{[M]},\muhat_j^{[M]}) - k^{[M]}(\vec{x}_i^{[M]},\vec{x}_j^{[M]})| 
		\\
		& \quad - |k(\mu_i,\mu_j) - k(\muhat_i^{[M]},\muhat_j^{[M]})|
	\end{align*}
	Choosing ${\ell}$ large enough and setting $M=M_{\ell}$ ensures
	\begin{equation*}
		k(\mu_i,\mu_j) \geq k^{[M_k]}(\vec{x}_i^{[M_k]},\vec{x}_j^{[M_{\ell}]}) - 2\epsilon
	\end{equation*}
	due to the continuity of the $k$ and \eqref{eq.conv_kernels}.
	Repeating this for all pairs $(i,j)$ and taking the maximum over all resulting $k$ then leads to
	\begin{align*}
		\sum_{i,j=1}^N \alpha_i \alpha_j k(\mu_i,\mu_j)
		& \geq \sum_{i,j=1}^N \alpha_i \alpha_j  k^{[M_{\ell}]}(\vec{x}_i^{[M_{\ell}]},\vec{x}_j^{[M_{\ell}]}) - 2N^2\epsilon
		\geq -2N^2\epsilon,
	\end{align*}
	where we used that $k^{[M_{\ell}]}$ is a kernel. Since $\epsilon>0$ was arbitrary, we find that
	\begin{equation*}
		\sum_{i,j=1}^N \alpha_i \alpha_j k(\mu_i,\mu_j) \geq 0.
	\end{equation*}
\end{proof}
%
%
\begin{remark}
	The function $\tw_k$ from the proof of Theorem \ref{thm.mfl_kernel} is also a modulus of continuity for $k$, i.e., for all $\mu_i\in\Pb(X)$, $i=1,\ldots,4$,
	\begin{equation*}
		|k(\mu_1,\mu_2)-k(\mu_3,\mu_4)|\leq \tw_k(\dkm^2[(\mu_1,\mu_2), (\mu_3,\mu_4)]).
	\end{equation*}
\end{remark}
\begin{proof}
	Let $\mu_i\in\mathcal{P}(X)$, $i=1,\ldots,4$, and $\epsilon>0$ be arbitrary. Choose $n\in\N_+$ such that
	\begin{equation*}
		\|k^{[M_n]}_\McS - k\|_\infty = \sup_{\mu,\mu^\prime\in\Pb(X)}|k^{[M_n]}_\McS(\mu,\mu^\prime)-k(\mu,\mu^\prime)| \leq \frac\epsilon2
	\end{equation*}
	(exists due to the Arzela-Ascoli Theorem). We then have
	\begin{align*}
		|k(\mu_1,\mu_2)-k(\mu_3,\mu_4)| 
		& \leq |k(\mu_1,\mu_2) - k^{[M_n]}_\McS(\mu_1,\mu_2)| \\
		& \quad  + |k^{[M_n]}_\McS(\mu_1,\mu_2) - k^{[M_n]}_\McS(\mu_3,\mu_4)|
		\\
		& \quad + |k^{[M_n]}_\McS(\mu_3,\mu_4) - k(\mu_3,\mu_4)| \\
		& \leq \frac\epsilon2 + \tw_k(\dkm^2[(\mu_1,\mu_2), (\mu_3,\mu_4)]) + \frac\epsilon2
	\end{align*}
	Since $\epsilon>0$ was arbitrary, we find that
	\begin{equation*}
		|k(\mu_1,\mu_2)-k(\mu_3,\mu_4)|\leq \tw_k(\dkm^2[(\mu_1,\mu_2), (\mu_3,\mu_4)]).
	\end{equation*}
	This finishes the proof. 
\end{proof}

\begin{remark}
	It is also possible to generalize Assumption \ref{assump.mfl_kernel_cond} and Theorem \ref{thm.mfl_kernel} to kernel sequences of the form $k^{[M]}: (Y \times X^M) \times (Y \times X^M) \rightarrow \R$ for some compact metric space $Y$, leading to a mean field kernel $k: (Y\times\Pb(X))\times (Y\times\Pb(X))\rightarrow \R$ using techniques presented for example in \cite{blanchet2014nash}.
\end{remark}

\section{The reproducing {kernel} Hilbert space of the mean field limit kernel}\label{section.mfl_rkhs}

The mean field limit $k$ established above is a kernel and therefore it is associated with a unique RKHS.  The goal of this section is the investigation of elements (functions) in this  RKHS and their relation to the elements of  RKHS  induced by $k^{[M]}$.  In particular, we establish the bottom part of Figure \ref{fig.mfl_kernel_diagram1}.  For brevity, define $H_M = H_{k^{[M]}}$ and $\|\cdot\|_M=\|\cdot\|_{k^{[M]}}$.
%
%
We start by noting the following interesting fact about feature space-feature map pairs for the kernel $k^{[M]}$. For the definition of feature maps, we refer to Section \ref{section.prelim}. 

\begin{proposition}
	For $M\in\N_+$, let $(\mathcal{H}_M,\Phi_M)$ be \emph{any} feature space-feature map pair for $k^{[M]}$.
	\begin{enumerate}
		\item For all $M\in\N_+$, $\Phi_M$ is invariant under permutations, i.e., for all $\vec{x}\in X^M$ and $\sigma \in \Permutations_M$ we have $\Phi_M(\sigma \vec{x})=\Phi_M(\vec{x})$.
		\item For all $M\in\N_+$ and $\vec{x}\in X^M$ we have $\|\Phi_M(\vec{x})\|_{\mathcal{H}_M}\leq \sqrt{C_k}$.
		\item $\sqrt{2\omega_k}$ is a modulus of continuity for $\Phi_M$ for all $M\in\N_+$, i.e., for all $\vec{x}_1,\vec{x}_2\in X^M$
		we have
		\begin{equation*}
			\|\Phi_M(\vec{x}_1)-\Phi_M(\vec{x}_2)\|_{\mathcal{H}_M}
			\leq \sqrt{2\omega_k\left(\dkm\left[ \muhat[\vec{x}_1],\muhat[\vec{x}_2]\right]\right)}
		\end{equation*}
	\end{enumerate}
\end{proposition}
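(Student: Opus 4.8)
The plan is to reduce all three claims to the single feature-space identity
\begin{equation*}
  \|\Phi_M(\vec{x})-\Phi_M(\vec{x}^\prime)\|_{\mathcal{H}_M}^2
  = k^{[M]}(\vec{x},\vec{x}) - 2k^{[M]}(\vec{x},\vec{x}^\prime) + k^{[M]}(\vec{x}^\prime,\vec{x}^\prime),
\end{equation*}
which follows by expanding the inner product and using $k^{[M]}(\vec{a},\vec{b})=\langle\Phi_M(\vec{b}),\Phi_M(\vec{a})\rangle_{\mathcal{H}_M}$ together with kernel symmetry; in particular $\|\Phi_M(\vec{x})\|_{\mathcal{H}_M}^2=k^{[M]}(\vec{x},\vec{x})$. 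This is just the kernel-metric computation from Section \ref{section.prelim} applied to the arbitrary pair $(\mathcal{H}_M,\Phi_M)$, and it holds for \emph{any} feature representation since both sides depend only on $k^{[M]}$.

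For the first claim I would apply the identity with $\vec{x}^\prime=\sigma\vec{x}$. The three resulting evaluations are $k^{[M]}(\vec{x},\vec{x})$, $k^{[M]}(\vec{x},\sigma\vec{x})$ and $k^{[M]}(\sigma\vec{x},\sigma\vec{x})$. Using the permutation symmetry of Assumption \ref{assump.mfl_kernel_cond} to strip a $\sigma$ from the first slot, and the symmetry $k^{[M]}(\vec{a},\vec{b})=k^{[M]}(\vec{b},\vec{a})$ of the kernel to swap slots (and thereby strip a $\sigma$ from the second slot as well), all three collapse to $k^{[M]}(\vec{x},\vec{x})$. The identity then yields $\|\Phi_M(\sigma\vec{x})-\Phi_M(\vec{x})\|_{\mathcal{H}_M}^2=0$, hence $\Phi_M(\sigma\vec{x})=\Phi_M(\vec{x})$.

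The second claim is immediate: $\|\Phi_M(\vec{x})\|_{\mathcal{H}_M}^2=k^{[M]}(\vec{x},\vec{x})\le C_k$ by uniform boundedness, so $\|\Phi_M(\vec{x})\|_{\mathcal{H}_M}\le\sqrt{C_k}$. For the third claim I would split the right-hand side of the identity as
\begin{equation*}
  \bigl[k^{[M]}(\vec{x}_1,\vec{x}_1)-k^{[M]}(\vec{x}_1,\vec{x}_2)\bigr]
  +\bigl[k^{[M]}(\vec{x}_2,\vec{x}_2)-k^{[M]}(\vec{x}_1,\vec{x}_2)\bigr]
\end{equation*}
and bound each bracket by the uniform continuity of Assumption \ref{assump.mfl_kernel_cond}. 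Comparing the pair $(\vec{x}_1,\vec{x}_1)$ with $(\vec{x}_1,\vec{x}_2)$, the first components agree, so $\dkm^2$ reduces exactly to $\dkm(\muhat[\vec{x}_1],\muhat[\vec{x}_2])$ and the bracket is at most $\omega_k(\dkm(\muhat[\vec{x}_1],\muhat[\vec{x}_2]))$; the second bracket gives the identical bound. Adding these shows $\|\Phi_M(\vec{x}_1)-\Phi_M(\vec{x}_2)\|_{\mathcal{H}_M}^2\le 2\omega_k(\dkm(\muhat[\vec{x}_1],\muhat[\vec{x}_2]))$, and taking square roots closes the argument.

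The only genuine subtlety, and the one place where care is needed, lies in the first claim: Assumption \ref{assump.mfl_kernel_cond} literally provides invariance only under permutations of the first argument, so one must explicitly combine it with the kernel's intrinsic symmetry to neutralize the permutation in the second slot. Everything else is direct substitution into the norm identity, so I do not expect any real obstacle beyond this bookkeeping.
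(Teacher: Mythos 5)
Your proposal is correct and follows essentially the same route as the paper: expand $\|\Phi_M(\vec{x})-\Phi_M(\vec{x}')\|_{\mathcal{H}_M}^2$ into kernel evaluations (valid for any feature pair since both sides depend only on $k^{[M]}$), then apply permutation invariance plus kernel symmetry for claim 1, uniform boundedness for claim 2, and uniform continuity with the observation that $\dkm^2$ collapses to $\dkm$ when one component pair coincides for claim 3. The only cosmetic difference is that you symmetrize the cross terms before bracketing, whereas the paper keeps $-k^{[M]}(\vec{x}_2,\vec{x}_1)-k^{[M]}(\vec{x}_1,\vec{x}_2)$ separate and bounds the two absolute differences directly; the estimates are identical.
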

\begin{proof}
	\begin{enumerate}
		\item Let $M\in\N_+$, $\vec{x}\in X^M$ and $\sigma\in\Permutations_M$ be arbitrary. From
		\begin{align*}
			\|\Phi_M(\sigma\vec{x})-\Phi_M(\vec{x})\|_{\mathcal{H}_M}^2
			& = \langle \Phi_M(\sigma\vec{x}, \Phi_M(\sigma\vec{x})\rangle_{\mathcal{H}_M} 
			- 2\langle \Phi_M(\sigma\vec{x}), \Phi_M(\vec{x})\rangle_{\mathcal{H}_M}
			\\ 
			& \quad + \langle \Phi_M(\vec{x}),\Phi_M(\vec{x})\rangle_{\mathcal{H}_M} \\
			&  = k^{[M]}(\sigma\vec{x},\sigma\vec{x}) - 2k^{[M]}(\sigma\vec{x},\vec{x}) + k^{[M]}(\vec{x},\vec{x}) \\
			& = k^{[M]}(\vec{x},\vec{x}) - 2k^{[M]}(\vec{x},\vec{x}) + k^{[M]}(\vec{x},\vec{x}) \\
			& = 0
		\end{align*}
		(where we used the symmetry and permutation invariance of $k^{[M]}$) we find that $\Phi_M(\sigma\vec{x})=\Phi_M(\vec{x})$, hence the permutation invariance of all $\Phi_M$.
		\item Let $M\in\N_+$ and $\vec{x}\in X^M$ be arbitrary, then
		\begin{equation*}
			\|\Phi_M(\vec{x})\|_{\mathcal{H}_M} 
			= \sqrt{\langle \Phi_M(\vec{x}),\Phi_M(\vec{x})\rangle_{\mathcal{H}_M} } 
			= \sqrt{k^{[M]}(\vec{x},\vec{x})} 
			\leq \sqrt{C_k}.
		\end{equation*}
		\item Let $M\in\N_+$ and $\vec{x}_1,\vec{x}_2\in X^M$ be arbitrary, then
		\begin{align*}
			\|\Phi_M(\vec{x}_1)-\Phi_M(\vec{x}_2)\|_{\mathcal{H}_M}^2
			& = \langle \Phi_M(\vec{x}_1), \Phi_M(\vec{x}_1) \rangle_{\mathcal{H}_M}
			- \langle \Phi_M(\vec{x}_2), \Phi_M(\vec{x}_1) \rangle_{\mathcal{H}_M} \\
			& \quad - \langle \Phi_M(\vec{x}_1), \Phi_M(\vec{x}_2) \rangle_{\mathcal{H}_M}
			+ \langle \Phi_M(\vec{x}_2), \Phi_M(\vec{x}_2) \rangle_{\mathcal{H}_M} \\
			& = k^{[M]}(\vec{x}_1,\vec{x}_1) -  k^{[M]}(\vec{x}_2,\vec{x}_1)
			- k^{[M]}(\vec{x}_1,\vec{x}_2) +  k^{[M]}(\vec{x}_2,\vec{x}_2) \\
			& \leq |  k^{[M]}(\vec{x}_1,\vec{x}_1) -  k^{[M]}(\vec{x}_2,\vec{x}_1)|
			+ |k^{[M]}(\vec{x}_1,\vec{x}_2) -  k^{[M]}(\vec{x}_2,\vec{x}_2)| \\
			& \leq 2\omega_k(\dkm(\muhat[\vec{x}_1], \muhat[\vec{x}_2])),
		\end{align*}
		hence
		\begin{equation*}
			\|\Phi_M(\vec{x}_1)-\Phi_M(\vec{x}_2)\|_{\mathcal{H}_M}
			\leq \sqrt{2\omega_k(\dkm(\muhat[\vec{x}_1], \muhat[\vec{x}_2]))}.
		\end{equation*}
	\end{enumerate}
\end{proof}
%
%
Next, we investigate  properties of functions $f\in H_M$ where $H_M$ is the RKHS corresponding to $k^{[M]}$.
\begin{proposition} \label{prop.mfl_rkhs_cond}
	Let $M\in\N_+$ and $f\in H_M$ be arbitrary.
	\begin{enumerate}
		\item For all $\vec{x} \in X^M$ and $\sigma\in\Permutations_M$ we have
		\begin{equation*}
			f(\sigma \vec{x})=f(\vec{x}).
		\end{equation*}
		\item For all $\vec{x} \in X^M$ we get
		\begin{equation*}
			|f(\vec{x})| \leq \|f\|_{H_M} \sqrt{C_k}.
		\end{equation*}
		\item Let $\vec{x}_1,\vec{x}_2\in X^M$ be arbitrary, then
		\begin{equation*}
			|f(\vec{x}_1)-f(\vec{x}_2)|\leq \sqrt{2\omega_k(\dkm(\muhat[\vec{x}_1],\muhat[\vec{x}_2]))}.
		\end{equation*}
	\end{enumerate}
\end{proposition}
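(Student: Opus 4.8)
The plan is to derive all three properties directly from the reproducing property, combined with the preceding Proposition applied to the \emph{canonical} feature map. Recall from the RKHS facts collected in Section~\ref{section.rkhs} that for the kernel $k^{[M]}$ the map $\Phi_M: X^M \to H_M$, $\Phi_M(\vec{x}) = k^{[M]}(\cdot,\vec{x})$, together with the feature space $H_M$, is a feature space--feature map pair for $k^{[M]}$. Hence the three conclusions of the preceding Proposition apply to this particular $\Phi_M$: it is invariant under permutations, it satisfies $\|\Phi_M(\vec{x})\|_{H_M} \le \sqrt{C_k}$, and it admits $\sqrt{2\omega_k}$ as a modulus of continuity. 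The reproducing property $f(\vec{x}) = \langle f, k^{[M]}(\cdot,\vec{x})\rangle_{H_M} = \langle f, \Phi_M(\vec{x})\rangle_{H_M}$, valid for every $f \in H_M$ and $\vec{x} \in X^M$, then transfers each of these to $f$.

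For part~1 I would use that, by part~1 of the preceding Proposition, $\Phi_M(\sigma\vec{x}) = \Phi_M(\vec{x})$ as elements of $H_M$, so that the reproducing property immediately gives
\begin{equation*}
	f(\sigma\vec{x}) = \langle f, \Phi_M(\sigma\vec{x})\rangle_{H_M} = \langle f, \Phi_M(\vec{x})\rangle_{H_M} = f(\vec{x}).
\end{equation*}
For parts~2 and~3 the key tool is the Cauchy--Schwarz inequality in $H_M$. Writing $f(\vec{x}) = \langle f, \Phi_M(\vec{x})\rangle_{H_M}$ and estimating gives $|f(\vec{x})| \le \|f\|_{H_M}\,\|\Phi_M(\vec{x})\|_{H_M} \le \|f\|_{H_M}\sqrt{C_k}$, which is part~2. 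Likewise, starting from $f(\vec{x}_1) - f(\vec{x}_2) = \langle f, \Phi_M(\vec{x}_1) - \Phi_M(\vec{x}_2)\rangle_{H_M}$ and inserting part~3 of the preceding Proposition, one obtains
\begin{equation*}
	|f(\vec{x}_1) - f(\vec{x}_2)| \le \|f\|_{H_M}\,\|\Phi_M(\vec{x}_1) - \Phi_M(\vec{x}_2)\|_{H_M} \le \|f\|_{H_M}\sqrt{2\omega_k\!\left(\dkm(\muhat[\vec{x}_1],\muhat[\vec{x}_2])\right)}.
\end{equation*}

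Since each step is a one-line application of an already established fact, I do not anticipate a genuine obstacle; the only point requiring care is the justification for invoking the preceding Proposition with the canonical feature map, which is legitimate precisely because that Proposition was stated for an \emph{arbitrary} feature space--feature map pair. I would also flag that the bound displayed in part~3 of the statement should carry the factor $\|f\|_{H_M}$, consistent with part~2: this factor is produced unavoidably by the Cauchy--Schwarz estimate and cannot in general be dropped.
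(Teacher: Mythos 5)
Your proof is correct and takes essentially the same route as the paper's: both rest on the reproducing property plus Cauchy--Schwarz, the only difference being that you invoke the preceding Proposition for the canonical feature map $\Phi_M(\vec{x})=k^{[M]}(\cdot,\vec{x})$, whereas the paper re-derives the needed bounds inline by expanding $\|k^{[M]}(\vec{x}_1,\cdot)-k^{[M]}(\vec{x}_2,\cdot)\|_{H_M}$ and $\|k^{[M]}(\vec{x},\cdot)\|_{H_M}$ directly. Your flag on part~3 is also justified: the paper's own proof arrives at the bound $\|f\|_{H_M}\sqrt{2\omega_k\left(\dkm(\muhat[\vec{x}_1],\muhat[\vec{x}_2])\right)}$, so the factor $\|f\|_{H_M}$ is indeed missing from the statement as printed (a typo that is harmless for the subsequent equicontinuity argument, which assumes $\|f_M\|_M\leq B$).
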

The arguments used in the proof are standard, but for completeness we provide all details.
\begin{proof}
	Using the reproducing property and symmetry of $k^{[M]}$, we find for $\vec x \in X^M$ and $\sigma\in\Permutations_M$
	\begin{equation*}
		f(\sigma \vec{x}) =\langle f, k^{[M]}(\sigma \vec{x},\cdot)\rangle_{H_M} 
		= \langle f, k^{[M]}(\vec{x},\cdot)\rangle_{H_M}
		= f(\vec{x}),
	\end{equation*}
	establishing the first claim.
	Next, using again the reproducing property of $k^{[M]}$, Cauchy-Schwarz and the boundedness of $k^{[M]}$ we get
	\item \begin{align*}
		|f(\vec{x})|  = |\langle f, k^{[M]}(\vec{x},\cdot) \rangle_{H_M}| 
		\leq \|f\|_{H_M} \|k^{[M]}(\vec{x},\cdot)\|_{H_M} 
		=  \|f\|_{H_M} \sqrt{k^{[M]}(\vec{x},\vec{x})} 
		\leq  \|f\|_{H_M} \sqrt{C_k},
	\end{align*}
	showing the second statement.
	Similarly, for $\vec{x}_1,\vec{x}_2\in X^M$ we get
	\item \begin{align*}
		|f(\vec{x}_1)-f(\vec{x}_2)| & = |\langle f, k^{[M]}(\vec{x}_1, \cdot)-k^{[M]}(\vec{x}_2, \cdot)\rangle_{H_M}| 
		\leq \|f\|_{H_M} \|k^{[M]}(\vec{x}_1,\cdot)-k^{[M]}(\vec{x}_2, \cdot)\|_{H_M} \\
		& = \|f\|_{H_M} \sqrt{k^{[M]}(\vec{x}_1,\vec{x}_1)-k^{[M]}(\vec{x}_1,\vec{x}_2) + k^{[M]}(\vec{x}_2,\vec{x}_2) - k^{[M]}(\vec{x}_2,\vec{x}_1)} \\
		& \leq \|f\|_{H_M} \sqrt{|k^{[M]}(\vec{x}_1,\vec{x}_1)-k^{[M]}(\vec{x}_1,\vec{x}_2)| 
			+ |k^{[M]}(\vec{x}_2,\vec{x}_2) - k^{[M]}(\vec{x}_2,\vec{x}_1)|} \\
		& \leq \|f\|_{H_M} \sqrt{2\omega_k(\dkm(\muhat[\vec{x}_1],\muhat[\vec{x}_2]))}.
	\end{align*}
\end{proof}
If a sequence $(f_M)_M$, $f_M\in H_M$, is uniformly bounded in the norm of $H_M$, then the second statement in Proposition \ref{prop.mfl_rkhs_cond} ensures that this sequence is point-wise bounded and the third statement implies that the sequence is equi--continuous.
This shows that Assumption \ref{assump.cardaliaguet2010} is fulfilled and Theorem \ref{thm.mfl_funcs} applies. Hence, the following  corollary holds true.
\begin{corollary}
	Let $f_M\in H_M$ with $\|f_M\|_M \leq B$ for some $B\in\Rnn$. Then there exists a subsequence $(f_{M_\ell})_\ell$ and some $f: \Pb(X)\rightarrow\Rnn$, such that
	\begin{equation*}
		\lim_{\ell\rightarrow\infty} \sup_{\vec x\in X^{M_\ell}} |f_{M_\ell}(\vec x) - f(\muhat[\vec x])| = 0.
	\end{equation*}
\end{corollary}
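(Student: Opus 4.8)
The plan is to show that the sequence $(f_M)_M$ satisfies all three conditions of Assumption \ref{assump.cardaliaguet2010} and then to invoke Theorem \ref{thm.mfl_funcs} directly. Each condition will follow from the corresponding part of Proposition \ref{prop.mfl_rkhs_cond}, where the uniform norm bound $\|f_M\|_M \leq B$ is exactly what converts the per-function estimates there into bounds uniform in $M$.

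First I would handle symmetry: part 1 of Proposition \ref{prop.mfl_rkhs_cond} gives $f_M(\sigma \vec{x}) = f_M(\vec{x})$ for every $M$, every $\vec{x} \in X^M$ and every permutation $\sigma$, which is precisely condition 1 of Assumption \ref{assump.cardaliaguet2010}. Next, for uniform boundedness, part 2 of Proposition \ref{prop.mfl_rkhs_cond} combined with $\|f_M\|_M \leq B$ yields $|f_M(\vec{x})| \leq B\sqrt{C_k}$ for all $M$ and all $\vec{x}$, so condition 2 holds with the constant $C_f := B\sqrt{C_k}$.

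The one step that requires a short argument is the uniform continuity condition. Part 3 of Proposition \ref{prop.mfl_rkhs_cond}, again combined with the norm bound, gives
\[
|f_M(\vec{x}_1) - f_M(\vec{x}_2)| \leq B\sqrt{2\,\omega_k\!\left(\dkm(\muhat[\vec{x}_1], \muhat[\vec{x}_2])\right)}
\]
for all $M$ and all $\vec{x}_1, \vec{x}_2 \in X^M$. To match condition 3 of Assumption \ref{assump.cardaliaguet2010}, I would set $\omega_f(r) := B\sqrt{2\,\omega_k(r)}$ and verify that $\omega_f$ is itself a modulus of continuity: it is continuous and non-decreasing, being a composition of the non-decreasing continuous maps $\omega_k$ and $\sqrt{\,\cdot\,}$ scaled by the constant $B \geq 0$, and it satisfies $\omega_f(0) = 0$ since $\omega_k(0) = 0$.

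With all three conditions of Assumption \ref{assump.cardaliaguet2010} verified for $(f_M)_M$, Theorem \ref{thm.mfl_funcs} applies and directly produces the claimed subsequence $(f_{M_\ell})_\ell$, the limit function $f$, and the stated uniform convergence. I do not expect a genuine obstacle here, as the statement is an immediate corollary of the preceding proposition and theorem; the only point demanding any care is confirming that taking a (scaled) square root of the modulus $\omega_k$ preserves the defining properties of a modulus of continuity.
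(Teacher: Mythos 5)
Your proof is correct and is essentially the paper's own argument: the paper deduces the corollary in exactly the same way, noting that Proposition \ref{prop.mfl_rkhs_cond} together with the uniform norm bound $\|f_M\|_M\leq B$ verifies Assumption \ref{assump.cardaliaguet2010} (pointwise boundedness and equicontinuity), and then applying Theorem \ref{thm.mfl_funcs}. Your only additions---making the symmetry check explicit and verifying that $\omega_f(r)=B\sqrt{2\omega_k(r)}$ is indeed a modulus of continuity---are details the paper leaves implicit.
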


The link in Figure \ref{fig.mfl_kernel_diagram1} between RKHS functions and mean field limits of RKHS functions on $H_M$ for $M\to \infty$ will now be established. 

\begin{theorem} \label{thm.mfl_rkhs_funcs}
	For each $f\in H_k$ there exists a subsequence $({M_\ell^{(2)}})_{\ell}$ of $(M_{\ell})_{\ell}$ and functions $f_{M_\ell^{(2)}}\in H_{M_\ell^{(2)}}$ such that
	\begin{equation*}
		\lim_{{\ell}\rightarrow\infty} \sup_{\vec{x}\in X^{M_\ell^{(2)}}} |f_{M^{(2)}_{\ell}}(\vec{x})-f(\muhat[\vec{x}])| = 0.
	\end{equation*}
\end{theorem}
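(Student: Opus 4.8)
The plan is to first establish the statement for $f$ in the dense pre-Hilbert space
\begin{equation*}
	H_\text{pre}=\text{span}\{k(\cdot,\mu)\mid\mu\in\Pb(X)\}
\end{equation*}
and then pass to a general $f\in H_k$ by a density-plus-diagonal argument. The single tool that makes the second step possible is the bound coming from the reproducing property, Cauchy--Schwarz, and the boundedness $0\le k(\mu,\mu)\le C_k$ (Step~5 of the proof of Theorem~\ref{thm.mfl_kernel} together with positive definiteness for $N=1$): for every $g\in H_k$,
\begin{equation*}
	\sup_{\mu\in\Pb(X)}|g(\mu)|=\sup_{\mu\in\Pb(X)}|\langle g,k(\cdot,\mu)\rangle_k|\le\|g\|_k\sqrt{C_k}.
\end{equation*}
In particular $\sup_{\vec x\in X^M}|g(\muhat[\vec x])|\le\|g\|_k\sqrt{C_k}$, so closeness in the $H_k$-norm transfers to uniform closeness of the evaluations along empirical measures, and does so uniformly in $M$.

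For the pre-Hilbert case I write $g=\sum_{n=1}^N\alpha_n k(\cdot,\mu_n)$. Since empirical measures are dense in $\Pb(X)$ (and the approximation quality improves as the number of points grows), for each $n$ I can choose $\vec x_n^{[M_\ell]}\in X^{M_\ell}$ with $\dkm(\muhat[\vec x_n^{[M_\ell]}],\mu_n)\to0$ as $\ell\to\infty$, and I define the natural candidate $g_{M_\ell}=\sum_{n=1}^N\alpha_n k^{[M_\ell]}(\cdot,\vec x_n^{[M_\ell]})\in H_{M_\ell}$. For $\vec x\in X^{M_\ell}$ I then split each summand as
\begin{align*}
	|k^{[M_\ell]}(\vec x,\vec x_n^{[M_\ell]})-k(\muhat[\vec x],\mu_n)|
	&\le|k^{[M_\ell]}(\vec x,\vec x_n^{[M_\ell]})-k(\muhat[\vec x],\muhat[\vec x_n^{[M_\ell]}])|\\
	&\quad+|k(\muhat[\vec x],\muhat[\vec x_n^{[M_\ell]}])-k(\muhat[\vec x],\mu_n)|.
\end{align*}
The first term is bounded, uniformly in $\vec x$, by $\sup_{\vec y,\vec y'}|k^{[M_\ell]}(\vec y,\vec y')-k(\muhat[\vec y],\muhat[\vec y'])|$, which tends to $0$ by the uniform convergence \eqref{eq.conv_kernels}; the second is bounded by $\tw_k(\dkm(\muhat[\vec x_n^{[M_\ell]}],\mu_n))$ using the modulus of continuity $\tw_k$ of $k$ from the remark after Theorem~\ref{thm.mfl_kernel} (noting that the first arguments coincide, so only the $\mu_n$-discrepancy survives in $\dkm^2$), which again tends to $0$ and is independent of $\vec x$. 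Summing over $n$ with weights $|\alpha_n|$ yields $\sup_{\vec x\in X^{M_\ell}}|g_{M_\ell}(\vec x)-g(\muhat[\vec x])|\to0$.

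Finally, for general $f\in H_k$ I choose $g_j\in H_\text{pre}$ with $\|f-g_j\|_k\le 1/j$. By the previous step each $g_j$ admits approximants $h^{(j)}_{M_\ell}\in H_{M_\ell}$ with $\sup_{\vec x\in X^{M_\ell}}|h^{(j)}_{M_\ell}(\vec x)-g_j(\muhat[\vec x])|\to0$ as $\ell\to\infty$; I pick indices $\ell_1<\ell_2<\cdots$ so that this supremum is $\le 1/j$, and set $M^{(2)}_j=M_{\ell_j}$ and $f_{M^{(2)}_j}=h^{(j)}_{M_{\ell_j}}$, which is the desired subsequence (after relabelling $j$ as $\ell$). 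Then
\begin{equation*}
	\sup_{\vec x}|f_{M^{(2)}_j}(\vec x)-f(\muhat[\vec x])|
	\le\sup_{\vec x}|f_{M^{(2)}_j}(\vec x)-g_j(\muhat[\vec x])|+\sup_{\vec x}|(g_j-f)(\muhat[\vec x])|
	\le\tfrac1j+\sqrt{C_k}\,\|g_j-f\|_k\to0,
\end{equation*}
where the second term is controlled by the boundedness bound from the first paragraph. I expect the main obstacle to be exactly this interleaving: the mean field approximation is only constructed for the finitely supported elements of $H_\text{pre}$, so the two limiting processes (density in $H_k$ and $M\to\infty$) must be combined by a diagonal extraction, which is legitimate only because the $H_k$-norm dominates the relevant uniform norm \emph{uniformly in $M$}.
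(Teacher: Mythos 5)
Your proposal is correct and follows essentially the same route as the paper's proof: both rest on the same three-term error decomposition (the $H_k$-density error, controlled via the reproducing property, Cauchy--Schwarz and the bound $k(\mu,\mu)\le C_k$; the empirical-measure approximation error, controlled by the modulus of continuity $\tw_k$; and the kernel-convergence error, controlled by \eqref{eq.conv_kernels}). The only difference is organizational: you prove the pre-Hilbert case first and then combine density with a diagonal extraction, whereas the paper interleaves the two limiting processes in a single construction with explicit $\epsilon_\ell/3$ index bookkeeping.
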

\begin{proof}
	Let $(\epsilon_{\ell})_{\ell}$ such that $\epsilon_{\ell}>0$ and $\epsilon_{\ell} \searrow 0$. Let $f\in H_k$ be arbitrary.
	
	\textbf{Step 1} For each ${\ell}\in\N_+$ choose
	\begin{equation*}
		{f_\ell^\text{pre}} = \sum_{{n}=1}^{{ N_\ell}}  {\alpha_n^{(\ell)}} k(\cdot, \mu^{({\ell})}_{n}) \in H_k^\text{pre}
	\end{equation*}
	with
	\begin{equation*}
		\|f-{f_\ell^\text{pre}}\|_k\leq \frac{\epsilon_{\ell}}{3\sqrt{C_k}}.
	\end{equation*}
	Such functions exist since $H_k^\text{pre}$ is dense in $H_k$ (recall that $H_k$ is the RKHS of kernel $k$, cf. Theorem \ref{thm.mfl_kernel}).
	Next, again for each ${\ell}\in\N_+$, choose ${L_0^{(\ell)}}\in\N_+$ such that for all ${j \geq L_0^{(\ell)}}$ we have
	\begin{equation*}
		\sup_{\vec{x},{ \vec{x}'}\in X^{M_{j}}} | k^{[M_{j}]}(\vec{x},\vec{x}^\prime)-k(\muhat[\vec{x}],\muhat[\vec{x}^\prime])| \leq \frac{\epsilon_{\ell}}{3(|\alpha_1^{({\ell})}| + \ldots + |\alpha_{{N_\ell}}^{({\ell})}| + 1)},
	\end{equation*}
	and such ${L_0^{(\ell)}}$ exist due to \eqref{eq.conv_kernels}.
	Additionally, for each ${n}=1,\ldots,{N_\ell}$ choose a sequence { $\left(\vec{x}^{(\ell,n)}_M\right)_M$, $\vec{x}^{(\ell,n)}_M\in X^M$,} with $\dkm(\muhat[{\vec{x}^{(\ell,n)}_M}],{\mu^{(\ell)}_n})\rightarrow 0$ for ${M}\rightarrow\infty$.
	Furthermore, for each ${\ell}\in\N_+$ and ${n}=1,\ldots,{N_\ell}$ choose some ${L_n^{(\ell)}}\in\N_+$ such that for all $M \geq M_{L_n^{(\ell)}}$ we have
	\begin{equation*}
		\dkm(\muhat[{\vec{x}^{(\ell,n)}_M}],{\mu^{(\ell)}_n}) 
		\leq \tw_k^{-1}\left( \frac{\epsilon_{\ell}}{3(|{\alpha_1^{(\ell)}}| + \ldots + |{\alpha_{N_\ell}^{(\ell)}}|+1)} \right){.}
	\end{equation*}
	Such ${ L_n^{(\ell)}}$ exist since the right hand side is fixed for given $\ell$ and $n$, and due to the convergence of $\muhat[{\vec{x}^{(\ell,n)}_M}]$ {to $\mu^{(\ell)}_n$}.
	Finally, define { $L_1=\max\{ L_0^{(1)}, L_1^{(1)},\ldots,L_{N_1}^{(1)} \}$ and for $\ell\geq 2$
		\begin{align*}
			L_\ell & = \max\left\{ L_1, \ldots, L_{\ell-1}, \max\{ L_0^{(\ell)}, L_1^{(\ell)},\ldots,L_{N_\ell}^{(\ell)} \} \right\}\\
			M_\ell^{(2)} & = M_{L_\ell}
	\end{align*} }
	and
	\begin{align*}
		{\hat{f}^\text{pre}_{M_\ell^{(2)}}} & = \sum_{{n}=1}^{{N_\ell}}  {\alpha_n^{(\ell)}} k(\cdot, \muhat[{\vec{x}_{M_\ell^{(2)}}^{(\ell,n)}}]) \\
		f_{M_\ell^{(2)}} & = \sum_{{n}=1}^{{N_\ell}}  {\alpha_n^{(\ell)}} k^{[{M_\ell^{(2)}}]}(\cdot, {\vec{x}_{M_\ell^{(2)}}^{(\ell,n)}}) {.}
	\end{align*}
	
	\textbf{Step 2} Let ${\ell}\in\N_+$ and $\vec{x}\in X^{M_\ell^{(2)}}$ be arbitrary.
	We have
	\begin{align*}
		|f_{M_\ell^{(2)}}(\vec{x}) - f(\muhat[\vec{x}])|
		& \leq | f(\muhat[\vec{x}]) - {f^\text{pre}_{M_\ell^{(2)}}}(\muhat[\vec{x}])|
		+ |{f^\text{pre}_{M_\ell^{(2)}}}(\muhat[\vec{x}]) - {\hat{f}^\text{pre}_{M_\ell^{(2)}}}(\muhat[\vec{x}]) |
		\\
		& \quad + |{\hat{f}^\text{pre}_{M_\ell^{(2)}}}(\muhat[\vec{x}]) - f_{M_\ell^{(2)}}(\vec{x}) | \\
		& = I + II + III
	\end{align*}
	and continue with
	\begin{align*}
		I & = |\langle f - {f_\ell^\text{pre}}, k(\cdot, \muhat[\vec{x}]) \rangle_k| 
		\leq \| f - {f_\ell^\text{pre}} \|_k \| k(\cdot, \muhat[\vec{x}]) \|_k 
		\leq \frac{\epsilon_{\ell}}{3\sqrt{C_k}} \cdot \sqrt{k(\muhat[\vec{x}],\muhat[\vec{x}])} 
		\leq \frac{\epsilon_{\ell}}{3},
	\end{align*}
	using the reproducing property of $k$, Cauchy-Schwarz, the choice of ${f_\ell^\text{pre}}$ (and again the reproducing property of $k$ together with the definition of $\|\cdot\|_k$) and finally the boundedness of $k$.
	Next,
	\begin{align*}
		II & = \left|
		\sum_{{n}=1}^{{N_\ell}} {\alpha_n^{(\ell)}} k(\muhat[\vec{x}], {\mu^{(\ell)}_n})
		-
		\sum_{{n}=1}^{{N_\ell}} {\alpha_n^{(\ell)}} k(\muhat[\vec{x}], \muhat[{\vec{x}_{M_\ell^{(2)}}^{(\ell,n)}}])
		\right| \\
		& \leq \sum_{{n}=1}^{{N_\ell}} |  {\alpha_n^{(\ell)}}| | k(\muhat[\vec{x}], {\mu^{(\ell)}_n}) - k(\muhat[\vec{x}], \muhat[{\vec{x}_{M_\ell^{(2)}}^{(\ell,n)}}])| 
		\\
		& \leq  \sum_{{n}=1}^{{N_\ell}} |  {\alpha_n^{(\ell)}}|  \tw_k\left( 
		\dkm^2[(\muhat[\vec{x}], \muhat[\vec{x}]), ({\mu^{(\ell)}_n}, \muhat[{\vec{x}_{M_\ell^{(2)}}^{(\ell,n)}}])]
		\right) \\
		& = \sum_{{n}=1}^{{N_\ell}} |  {\alpha_n^{(\ell)}}|  \tw_k\left( 
		\dkm({\mu^{(\ell)}_n},  \muhat[{\vec{x}_{M_\ell^{(2)}}^{(\ell,n)}}])
		\right)  
		\leq \sum_{{n}=1}^{{N_\ell}} |  {\alpha_n^{(\ell)}}| \frac{\epsilon_{\ell}}{3(|{\alpha_1^{(\ell)}}| + \ldots + |{\alpha_{N_\ell}^{(\ell)}}|+1)} 
		\leq \frac{\epsilon_{\ell}}{3},
	\end{align*}
	where we used the definition of $f^\text{pre}_{M_\ell^{(2)}}$ and $\hat{f}^\text{pre}_{M_\ell^{(2)}}$, respectively,
	the triangle inequality,
	and the choice of $\vec{x}^{(\ell,n)}_M$ and $M_\ell^{(2)}$.
	Finally,
	\begin{align*}
		III & = \left| 
		\sum_{{n}=1}^{{N_\ell}} {\alpha_n^{(\ell)}} k(\muhat[\vec{x}], \muhat[{\vec{x}_{M_\ell^{(2)}}^{(\ell,n)}}])
		-
		\sum_{{n}=1}^{{N_\ell}}  {\alpha_n^{(\ell)}} k^{[{M_\ell^{(2)}}]}(\vec{x}, {\vec{x}_{M_\ell^{(2)}}^{(\ell,n)}})
		\right|  \\
		&\leq  \sum_{{n}=1}^{{N_\ell}} | {\alpha_n^{(\ell)}}| 
		|k(\muhat[\vec{x}], \muhat[{\vec{x}_{M_\ell^{(2)}}^{(\ell,n)}}])
		- 
		k^{[{M_\ell^{(2)}}]}(\vec{x},{\vec{x}_{M_\ell^{(2)}}^{(\ell,n)}})| \\
		& \leq  \sum_{{n}=1}^{{N_\ell}} | {\alpha_n^{(\ell)}}|  \left(
		\sup_{\vec{x}_1,\vec{x}_2\in X^{{M_\ell^{(2)}}}} |k^{[{M_\ell^{(2)}}]}(\vec{x}_1,\vec{x}_2) - k(\muhat[\vec{x}_1],\muhat[\vec{x}_2])|
		\right) \\
		& \leq  \sum_{{n}=1}^{{N_\ell}} | {\alpha_n^{(\ell)}}| \frac{\epsilon_{\ell}}{3(|{\alpha_1^{(\ell)}}| + \ldots + |{\alpha_{N_\ell}^{(\ell)}}| + 1)}  \leq \frac{\epsilon_{\ell}}{3}.
	\end{align*}
	Altogether,
	\begin{equation*}
		|f_{M_\ell^{(2)}}(\vec{x}) - f(\muhat[\vec{x}])| \leq \epsilon_{\ell}
	\end{equation*}
	{ for all $\ell\in\N_+$}, and since $\vec{x}\in X^{M_\ell^{(2)}}$ was arbitrary, we get
	\begin{equation*}
		\sup_{\vec{x}\in X^{M_\ell^{(2)}}} | f_{M_\ell^{(2)}}(\vec{x}) - f(\muhat[\vec{x}])| \leq \epsilon_{\ell}
	\end{equation*}
	which in turn implies together with $\epsilon_{\ell}\searrow 0$ that
	\begin{equation*}
		\lim_{{\ell}\rightarrow\infty} \sup_{\vec{x}\in X^{M_\ell^{(2)}}} |f_{M^{(2)}_\ell}(\vec{x})-f(\muhat[\vec{x}])| = 0.
	\end{equation*}
\end{proof}
Summarizing,  a generic RKHS function $f\in H_k$ is obtained by the following procedure: Consider the mean field limit of the $k^{[M]}$ to obtain  $k$ and then form its RKHS $H_k$. Equivalently,  we may form the RKHS $H_M$ for each $k^{[M]}$ and then go to the mean field limit of a suitable (sub)sequence of RKHS functions $f_M\in H_M$.

\section{Examples} \label{section.examples}
We now introduce two large classes of concrete kernel sequences that are suitable for the mean field limit as outlined in the previous two sections.
\subsection{Pullback kernels}
Our first example are sequences of kernels that arise as the pull-backs \cite[Section~5.4]{paulsen2016introduction} of a sufficiently regular kernel along mean field compatible functions. 
\begin{proposition} \label{prop.pullback}
	Let $Y$ be a Banach space, $k_0: Y \times Y \rightarrow \R$ be a kernel on $Y$ and $\phi^{[M]}: X^M \rightarrow Y$ a sequence of functions. Furthermore, assume that
	\begin{enumerate}
		\item \emph{(Boundedness of $k_0$)} There exists a $C_{k_0}\in\Rnn$ with $|k_0(y,y^\prime)|\leq C_{k_0}$ for all $y,y'\in Y$.
		\item \emph{(Continuity of $k_0$)} The kernel $k_0$ has a modulus of continuity $\omega_{k_0}$, i.e., 
		\begin{equation*}
			|k_0(y_1,y_1^\prime)-k_0(y_2,y_2^\prime)|\leq \omega_{k_0}(\|y_1-y_2\|_Y + \|y_1'-y_2'\|_Y)
		\end{equation*}
		for all $y_1,y_1',y_2,y_2'\in Y$.
		\item \emph{(Symmetry of $\phi^{[M]}$)} For all $M\in\N$, the function $\phi^{[M]}$ is permutation invariant, i.e., for all $\vec{x}\in X^M$ and $\sigma\in \Permutations_M$ we have $\phi^{[M]}(\sigma \vec x)=\phi^{[M]}(\vec x)$.
		\item \emph{(Uniform continuity of $\phi^{[M]}$)} There exists a modulus of continuity $\omega_\phi:\Rnn\rightarrow\Rnn$ such that for all $M\in\N_+$, $\vec{x},\vec{x}^\prime\in X^M$
		\begin{equation*}
			\|\phi^{[M]}(\vec{x})-\phi^{[M]}(\vec{x}')\|_Y
			\leq \omega_\phi\left( \dkm(\muhat[\vec{x}],\muhat[\vec{x}^\prime])\right).
		\end{equation*}
	\end{enumerate}
	Then $k^{[M]}: X^M \times X^M \rightarrow \R$ with $k^{[M]}(\vec x, \vec x')=k_0(\phi^{[M]}(\vec x), \phi^{[M]}(\vec x'))$ is a sequence of kernels on $X^M$ fulfilling Assumption \ref{assump.mfl_kernel_cond}.
\end{proposition}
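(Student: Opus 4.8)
The plan is to separate the statement into its two assertions: first that each $k^{[M]}$ is genuinely a kernel on $X^M$, and second that the sequence $(k^{[M]})_M$ satisfies the three conditions of Assumption \ref{assump.mfl_kernel_cond}. Both reduce to transporting the corresponding properties of $k_0$ through the maps $\phi^{[M]}$, so no deep new idea is needed; the only nontrivial point is assembling a single modulus of continuity for the composition.

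For the kernel property I would invoke the feature map characterization of kernels. Since $k_0$ is a kernel on $Y$, there exist a feature space $\mathcal{H}_0$ and a feature map $\Phi_0: Y \rightarrow \mathcal{H}_0$ with $k_0(y,y') = \langle \Phi_0(y'), \Phi_0(y)\rangle_{\mathcal{H}_0}$. Setting $\Phi_M := \Phi_0 \circ \phi^{[M]}: X^M \rightarrow \mathcal{H}_0$ then gives
\begin{equation*}
	k^{[M]}(\vec x, \vec x') = k_0(\phi^{[M]}(\vec x), \phi^{[M]}(\vec x')) = \langle \Phi_M(\vec x'), \Phi_M(\vec x)\rangle_{\mathcal{H}_0},
\end{equation*}
so $\Phi_M$ is a feature map for $k^{[M]}$ with feature space $\mathcal{H}_0$; hence each $k^{[M]}$ is a kernel (equivalently, symmetric and positive definite). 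This step uses only that $k_0$ is a kernel. Next I would verify the three items of Assumption \ref{assump.mfl_kernel_cond}. Symmetry in $\vec x$ follows at once from the permutation invariance of $\phi^{[M]}$, since $k^{[M]}(\sigma\vec x, \vec x') = k_0(\phi^{[M]}(\sigma\vec x), \phi^{[M]}(\vec x')) = k_0(\phi^{[M]}(\vec x), \phi^{[M]}(\vec x')) = k^{[M]}(\vec x, \vec x')$ for every $\sigma\in\Permutations_M$. Uniform boundedness is equally direct with the choice $C_k := C_{k_0}$, because $|k^{[M]}(\vec x, \vec x')| = |k_0(\phi^{[M]}(\vec x), \phi^{[M]}(\vec x'))| \le C_{k_0}$ uniformly in $M$.

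The only step needing a small construction is uniform continuity. Chaining the modulus of $k_0$ with that of $\phi^{[M]}$, for $\vec x_1, \vec x_1', \vec x_2, \vec x_2' \in X^M$ I would estimate
\begin{align*}
	|k^{[M]}(\vec x_1, \vec x_1') - k^{[M]}(\vec x_2, \vec x_2')|
	& \le \omega_{k_0}\bigl( \|\phi^{[M]}(\vec x_1) - \phi^{[M]}(\vec x_2)\|_Y + \|\phi^{[M]}(\vec x_1') - \phi^{[M]}(\vec x_2')\|_Y \bigr) \\
	& \le \omega_{k_0}\bigl( \omega_\phi(\dkm(\muhat[\vec x_1], \muhat[\vec x_2])) + \omega_\phi(\dkm(\muhat[\vec x_1'], \muhat[\vec x_2'])) \bigr).
\end{align*}
Writing $a = \dkm(\muhat[\vec x_1], \muhat[\vec x_2])$ and $b = \dkm(\muhat[\vec x_1'], \muhat[\vec x_2'])$, so that $\dkm^2[(\muhat[\vec x_1], \muhat[\vec x_1']), (\muhat[\vec x_2], \muhat[\vec x_2'])] = a + b$, monotonicity of $\omega_\phi$ yields $\omega_\phi(a) + \omega_\phi(b) \le 2\omega_\phi(a+b)$, and monotonicity of $\omega_{k_0}$ then gives the bound $\omega_{k_0}(2\omega_\phi(a+b))$.

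I would therefore define
\begin{equation*}
	\omega_k(r) := \omega_{k_0}(2\omega_\phi(r)),
\end{equation*}
which is continuous and non-decreasing as a composition of such functions and satisfies $\omega_k(0) = \omega_{k_0}(2\omega_\phi(0)) = \omega_{k_0}(0) = 0$; hence it is a modulus of continuity that does not depend on $M$, and the chain of estimates above is precisely the required uniform continuity condition. The mild obstacle here is purely one of bookkeeping, namely checking that the composed bound is again a legitimate modulus of continuity, and it is handled by the elementary inequality $\omega_\phi(a) + \omega_\phi(b) \le 2\omega_\phi(a+b)$; no part of the argument is genuinely hard, and one expects the three verifications together with the feature-map observation to complete the proof.
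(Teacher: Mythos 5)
Your proof is correct and follows essentially the same route as the paper: the kernel property via pulling back $k_0$ along $\phi^{[M]}$ (your feature-map composition is exactly the standard proof of that fact), symmetry and boundedness transported directly, and uniform continuity by chaining the two moduli. In fact you are slightly more complete than the paper, which ends its continuity estimate with ``for an appropriate modulus of continuity $\omega_k$'' where you explicitly exhibit $\omega_k(r) = \omega_{k_0}(2\omega_\phi(r))$ via the inequality $\omega_\phi(a)+\omega_\phi(b) \leq 2\omega_\phi(a+b)$.
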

\begin{proof}
	Since $k^{[M]}$ is the pull-back of $k_0$ along $\phi^{[M]}$, it is a kernel on $X^M$.
	Symmetry is clear,
	\begin{equation*}
		k^{[M]}(\sigma \vec x, \vec x') = k_0(\phi^{[M]}(\sigma \vec x), \phi^{[M]}(\vec x')) 
		= k_0(\phi^{[M]}(\vec x), \phi^{[M]}(\vec x')) = k^{[M]}(\vec x, \vec x').
	\end{equation*}
	Uniform boundedness follows from boundedness of $k_0$, hence $C_k=C_{k_0}$.
	For the uniform continuity, let $M\in\N_+$, $\vec{x}_1,\vec{x}_1^\prime,\vec{x}_2,\vec{x}_2'\in X^M$, then
	\begin{align*}
		|k^{[M]}(\vec{x}_1,\vec{x}_1^\prime)-k^{[M]}(\vec{x}_2,\vec{x}_2^\prime)|
		& = |k_0(\phi^{[M]}(\vec{x}_1),\phi^{[M]}(\vec{x}_1^\prime))-k_0(\phi^{[M]}(\vec{x}_2),\phi^{[M]}(\vec{x}_2^\prime))| \\
		& \leq \omega_{k_0}\left( 
		\| \phi^{[M]}(\vec x_1) - \phi^{[M]}(\vec x_2)\|_Y
		+ \| \phi^{[M]}(\vec x_1') - \phi^{[M]}(\vec x_2')\|_Y
		\right) \\
		& \leq \omega_{k_0}\left( 
		\omega_\phi(\dkm(\muhat[\vec x_1], \muhat[\vec x_2]) )
		+\omega_\phi(\dkm(\muhat[\vec x_1], \muhat[\vec x_2']) )
		\right) \\
		& \leq \omega_k \left( \dkm^2\left[ (\muhat[\vec{x}_1],\muhat[\vec{x}_1^\prime),  (\muhat[\vec{x}_2],\muhat[\vec{x}_2^\prime)\right] \right)
	\end{align*}
	for an appropriate modulus of continuity $\omega_k$.
\end{proof}

\subsection{Double-sum kernels}
The next class of examples has been introduced by \cite{kim2021bayesian} and extended by \cite{buathong2020kernels}, though similar constructions have been used earlier \cite{gartner2002multi}.
However, the connection to mean field limits and kernel mean embeddings has not yet been investigated.
%
%
\begin{proposition} \label{prop.ds_mfl}
	Let $k_0: X \times X \rightarrow \R$ be a kernel bounded by $|k_0(x,x')|\leq C_{k_0}$ for some $C_{k_0}\in\Rnn$.
	Define for $M\in\Np$ the map $k^{[M]}: X^M \times X^M \rightarrow \R$ by
	\begin{equation} \label{eq.double_sum_kernel}
		k^{[M]}(\vec{x},\vec{x}') = \frac{1}{M^2} \sum_{m,m'=1}^M k_0(x_m, x_{m'}').
	\end{equation}
	Then $k^{[M]}$ are kernels that are permutation invariant in their first argument, and that are uniformly bounded.
\end{proposition}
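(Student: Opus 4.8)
The plan is to read off all three claims from a single feature-space representation of $k^{[M]}$. Since $k_0$ is a kernel on $X$, the definition recalled in Section~\ref{section.rkhs} provides a feature space $\mathcal{H}_0$ and a feature map $\Phi_0: X \to \mathcal{H}_0$ with $k_0(x,x') = \langle \Phi_0(x'), \Phi_0(x)\rangle_{\mathcal{H}_0}$ for all $x,x' \in X$.

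First I would introduce the averaged feature map $\Phi_M: X^M \to \mathcal{H}_0$, $\Phi_M(\vec{x}) = \frac{1}{M}\sum_{m=1}^M \Phi_0(x_m)$, and compute, using bilinearity of the inner product,
$$\langle \Phi_M(\vec{x}'), \Phi_M(\vec{x})\rangle_{\mathcal{H}_0} = \frac{1}{M^2}\sum_{m,m'=1}^M \langle \Phi_0(x_{m'}'), \Phi_0(x_m)\rangle_{\mathcal{H}_0} = \frac{1}{M^2}\sum_{m,m'=1}^M k_0(x_m, x_{m'}') = k^{[M]}(\vec{x},\vec{x}').$$
Thus $(\mathcal{H}_0, \Phi_M)$ is a feature space--feature map pair for $k^{[M]}$, and by the very definition of a kernel this already shows that each $k^{[M]}$ is a kernel on $X^M$. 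If one prefers to verify this via the characterization recalled in Section~\ref{section.rkhs} that a function is a kernel if and only if it is symmetric and positive definite, then symmetry follows from symmetry of $k_0$ under swapping its arguments, and positive definiteness from $\sum_{i,j}\alpha_i\alpha_j k^{[M]}(\vec{x}_j,\vec{x}_i) = \|\sum_i \alpha_i \Phi_M(\vec{x}_i)\|^2_{\mathcal{H}_0} \geq 0$.

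Permutation invariance in the first argument is then immediate: for $\sigma \in \Permutations_M$ the average $\Phi_M(\sigma\vec{x}) = \frac{1}{M}\sum_m \Phi_0(x_{\sigma(m)})$ equals $\Phi_M(\vec{x})$, since permuting the summation index only reorders the terms; hence $k^{[M]}(\sigma\vec{x},\vec{x}') = \langle \Phi_M(\vec{x}'), \Phi_M(\sigma\vec{x})\rangle_{\mathcal{H}_0} = k^{[M]}(\vec{x},\vec{x}')$. For uniform boundedness I would apply the triangle inequality to the double sum, $|k^{[M]}(\vec{x},\vec{x}')| \leq \frac{1}{M^2}\sum_{m,m'=1}^M |k_0(x_m, x_{m'}')| \leq C_{k_0}$, which holds for every $M \in \Np$, so one may take the uniform bound $C_k = C_{k_0}$.

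There is no genuine obstacle here; the only point worth highlighting is that introducing the averaged feature map makes positive definiteness transparent and avoids any direct manipulation of the quadratic form (e.g.\ via Schur products). I would additionally record the interpretation that links this example to the later sections: with the empirical measure $\muhat[\vec{x}]$ one has $k^{[M]}(\vec{x},\vec{x}') = \iint k_0(x,y)\,\mathrm{d}\muhat[\vec{x}](x)\,\mathrm{d}\muhat[\vec{x}'](y) = \langle f^{k_0}_{\muhat[\vec{x}]}, f^{k_0}_{\muhat[\vec{x}']}\rangle_{k_0}$, i.e.\ $k^{[M]}$ is the $H_{k_0}$-inner product of the kernel mean embeddings of the two empirical measures, which is the bridge to the mean field limit. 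Note that uniform continuity in the sense of Assumption~\ref{assump.mfl_kernel_cond} is \emph{not} claimed here and would require an additional regularity hypothesis on $k_0$, consistent with the weaker statement of the proposition.
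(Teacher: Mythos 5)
Your proposal is correct, but it proves the key claim by a genuinely different route than the paper. The paper works directly with the definition of positive definiteness: symmetry of $k^{[M]}$ is checked term by term from the symmetry of $k_0$, and the quadratic form $\sum_{i,j=1}^N \alpha_i\alpha_j k^{[M]}(\vec{x}^i,\vec{x}^j)$ is flattened, by reindexing, into a single quadratic form for $k_0$ over the product index set $\mathcal{I}=\{1,\ldots,N\}\times\{1,\ldots,M\}$ with coefficients $\alpha_i/M$, which is nonnegative because $k_0$ is positive definite. You instead construct the averaged feature map $\Phi_M(\vec{x})=\frac{1}{M}\sum_{m=1}^M\Phi_0(x_m)$ and read everything off the identity $k^{[M]}(\vec{x},\vec{x}')=\langle \Phi_M(\vec{x}'),\Phi_M(\vec{x})\rangle_{\mathcal{H}_0}$. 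Your route is more economical (one computation yields kernel-ness, permutation invariance, and, with the triangle inequality, the uniform bound $C_k=C_{k_0}$), and it anticipates precisely the representation the paper only introduces later, in the proof of Proposition \ref{prop.uniform_continuity_ds}, namely $k^{[M]}(\vec{x},\vec{x}')=\langle f^{k_0}_{\muhat[\vec{x}']},f^{k_0}_{\muhat[\vec{x}]}\rangle_{k_0}$; indeed, when $\Phi_0$ is the canonical feature map, your $\Phi_M(\vec{x})$ is exactly the kernel mean embedding of $\muhat[\vec{x}]$. What the paper's argument buys in exchange is self-containedness and elementarity: it never needs to fix a feature space and works only with the positive definiteness of $k_0$ as a scalar inequality. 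Your closing observations are also accurate: permutation invariance is indeed immediate from reordering the sum (the paper does not even spell it out), and uniform continuity in the sense of Assumption \ref{assump.mfl_kernel_cond} is deliberately not claimed here, since it requires the additional hypotheses of Proposition \ref{prop.uniform_continuity_ds}.
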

\begin{proof}
	Let $M\in\Np$ be arbitrary. First, we establish that $k^{[M]}$ is indeed a kernel by showing that it is a symmetric, positive definite function. Note that this fact has been established earlier, cf. e.g. \cite{buathong2020kernels}, but for convenience we provide a full proof.
	For all $\vec{x},\vec{x}'\in X^M$ we have (using the symmetry of $k$)
	\begin{equation*}
		k^{[M]}(\vec{x},\vec{x}') = \frac{1}{M^2} \sum_{m,m'=1}^M k_0(x_m, x_{m'}')
		= \frac{1}{M^2} \sum_{m,m'=1}^M k_0(x_{m'}', x_m) = k^{[M]}(\vec{x}', \vec{x}) ,
	\end{equation*}
	i.e., $k^{[M]}$ is symmetric. Let $N\in\Np$ and $\vec{x}^1,\ldots,\vec{x}^N\in X^M$, $\alpha\in \R^N$ be arbitrary, then
	\begin{align*}
		\sum_{i,j=1}^N \alpha_i \alpha_j k^{[M]}(\vec{x}^i, \vec{x}^j) 
		& = \sum_{i,j=1}^N \alpha_i \alpha_j \frac{1}{M^2} \sum_{m,m'=1}^M k_0(x^i_m, x^j_{m'}) \\
		& = \sum_{i,j=1}^N \sum_{m,m'=1}^M \alpha_i \alpha_j \frac{1}{M^2} k_0(x^i_m, x^j_{m'}) \\
		& = \sum_{(i,m),(j,m')\in \mathcal{I}} \frac{\alpha_i}{M} \frac{\alpha_j}{M} k_0(x^i_m, x^j_{m'}) \geq 0,
	\end{align*}
	where we defined $\mathcal{I}=\{1,\ldots,N\}\times\{1,\ldots,M\}$ and used that $k_0$ is positive definite.
	
	For the uniform boundedness, let $\vec{x},\vec{x}'\in X^M$, then
	\begin{equation*}
		|k^{[M]}(\vec{x}, \vec{x}')| \leq \frac{1}{M^2} \sum_{m,m'=1}^M |k_0(x_m, x_{m'}')| \leq \frac{1}{M^2} M^2 C_{k_0} = C_{k_0}.
	\end{equation*}
\end{proof}
In addition to permutation-invariance and boundedness, we also have a form of uniform continuity of double sum kernels.
\begin{proposition} \label{prop.uniform_continuity_ds}
	Let $k_0: X \times X \rightarrow \R$ be a kernel bounded by $|k_0(x,x')|\leq C_{k_0}$ for some $C_{k_0}\in\Rnn$, and assume that $(X,d_{k_0})$ is a separable metric space, where
	\begin{equation*}
		d_{k_0}: X \times X \rightarrow \Rnn, \: d_{k_0}(x,x')=\|\Phi_{k_0}(x)-\Phi_{k_0}(x')\|_{k_0}
	\end{equation*}
	is the usual kernel metric, cf. Section \ref{section.rkhs}. Then the double sum kernels $k^{[M]}$ defined in \eqref{eq.double_sum_kernel} are uniformly continuous with respect to the Kantorowich-Rubinstein distance induced by $d_{k_0}$.
\end{proposition}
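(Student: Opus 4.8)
The plan is to recognize the double-sum kernel as an inner product of \emph{kernel mean embeddings} and then to control the resulting differences by the Kantorovich-Rubinstein distance induced by $d_{k_0}$. First I would record the representation
\begin{equation*}
	k^{[M]}(\vec{x},\vec{x}') = \left\langle f^{k_0}_{\muhat[\vec{x}]}, f^{k_0}_{\muhat[\vec{x}']} \right\rangle_{k_0},
	\qquad f^{k_0}_{\muhat[\vec{x}]} = \frac{1}{M}\sum_{m=1}^M k_0(\cdot,x_m) \in H_{k_0},
\end{equation*}
which follows directly from the reproducing property of $k_0$. For the empirical measures appearing in the statement the embedding is a finite sum, hence well-defined without any integrability concern; the separability of $(X,d_{k_0})$ is what guarantees, via Pettis' theorem (note that $\Phi_{k_0}$ is an isometry of $(X,d_{k_0})$ into $H_{k_0}$, so it is essentially separably valued), that $f^{k_0}_\mu$ and the induced distance remain meaningful for general $\mu\in\Pb(X)$.

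Next I would split the difference bilinearly. Writing $a = f^{k_0}_{\muhat[\vec{x}_1]}$, $b = f^{k_0}_{\muhat[\vec{x}_1']}$, $c = f^{k_0}_{\muhat[\vec{x}_2]}$, $e = f^{k_0}_{\muhat[\vec{x}_2']}$, bilinearity and Cauchy-Schwarz give
\begin{equation*}
	\left| \langle a,b\rangle_{k_0} - \langle c,e\rangle_{k_0} \right|
	\leq \|a-c\|_{k_0}\,\|b\|_{k_0} + \|c\|_{k_0}\,\|b-e\|_{k_0},
\end{equation*}
and the norms are uniformly controlled since $\|f^{k_0}_{\muhat[\vec{x}]}\|_{k_0}^2 = k^{[M]}(\vec{x},\vec{x}) \leq C_{k_0}$ by Proposition \ref{prop.ds_mfl}.

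The core step is to bound the maximum mean discrepancy $\|f^{k_0}_\mu - f^{k_0}_\nu\|_{k_0}$ by the $d_{k_0}$-Kantorovich-Rubinstein distance. Using Hilbert-space duality together with $\langle g, f^{k_0}_\mu\rangle_{k_0} = \int_X g\,\mathrm{d}\mu$ (reproducing property and linearity of the integral), I would write
\begin{equation*}
	\|f^{k_0}_\mu - f^{k_0}_\nu\|_{k_0} = \sup_{\|g\|_{k_0}\leq 1} \int_X g\,\mathrm{d}(\mu-\nu).
\end{equation*}
Every $g$ in the RKHS unit ball is $1$-Lipschitz with respect to $d_{k_0}$, because $|g(x)-g(x')| = |\langle g, \Phi_{k_0}(x)-\Phi_{k_0}(x')\rangle_{k_0}| \leq \|g\|_{k_0}\,d_{k_0}(x,x')$; hence the supremum over the unit ball is dominated by the supremum over all $1$-Lipschitz functions, which is exactly the Kantorovich-Rubinstein distance $\dkm(\mu,\nu)$ induced by $d_{k_0}$.

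Combining the three estimates with $\mu=\muhat[\vec{x}_i]$ and $\nu=\muhat[\vec{x}_i']$ then yields
\begin{equation*}
	\left| k^{[M]}(\vec{x}_1,\vec{x}_1') - k^{[M]}(\vec{x}_2,\vec{x}_2') \right|
	\leq \sqrt{C_{k_0}}\left( \dkm(\muhat[\vec{x}_1],\muhat[\vec{x}_2]) + \dkm(\muhat[\vec{x}_1'],\muhat[\vec{x}_2']) \right),
\end{equation*}
uniformly in $M$, so that $\omega_k(r)=\sqrt{C_{k_0}}\,r$ is a (linear) modulus of continuity. I expect the main obstacle to be the rigorous justification of the maximum-mean-discrepancy-to-Kantorovich-Rubinstein comparison at the level of general measures, namely the identity $\langle g, f^{k_0}_\mu\rangle_{k_0} = \int_X g\,\mathrm{d}\mu$ and the attainment of the dual supremum, where measurability of $g$ and Bochner integrability of $\Phi_{k_0}$ must be handled; this is precisely where separability of $(X,d_{k_0})$ enters. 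For the empirical measures in the statement these objects are finite sums and hence unproblematic.
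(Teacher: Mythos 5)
Your proof is correct and follows the same skeleton as the paper's: rewrite $k^{[M]}$ as an inner product of kernel mean embeddings, split the difference bilinearly with Cauchy--Schwarz, bound the embedding norms by $\sqrt{C_{k_0}}$, and compare the resulting RKHS distance of embeddings with the Kantorovich--Rubinstein distance induced by $d_{k_0}$. The one genuine difference lies in that last step: the paper simply cites \cite[Theorem~21]{sriperumbudur2010hilbert} for
\begin{equation*}
	\|f^{k_0}_{\muhat[\vec{x}_1]}-f^{k_0}_{\muhat[\vec{x}_2]}\|_{k_0}\leq \widetilde\dkm(\muhat[\vec{x}_1],\muhat[\vec{x}_2]),
\end{equation*}
whereas you prove this inequality directly: by Hilbert-space duality the left-hand side equals $\sup_{\|g\|_{k_0}\leq 1}\int_X g\,\mathrm{d}(\mu-\nu)$, every $g$ in the RKHS unit ball is $1$-Lipschitz with respect to $d_{k_0}$ because $|g(x)-g(x')|=|\langle g,\Phi_{k_0}(x)-\Phi_{k_0}(x')\rangle_{k_0}|\leq \|g\|_{k_0}\,d_{k_0}(x,x')$, and hence the supremum is dominated by the supremum defining $\widetilde\dkm$. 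This buys self-containedness (your argument is essentially the proof behind the cited theorem) and it also clarifies the role of the separability hypothesis: for the empirical measures that actually occur in the statement, all embeddings are finite sums and your comparison needs no separability at all; separability (and the Pettis/Bochner considerations you mention) only matter if one wants the same comparison for general measures in $\Pb(X)$, which is what the citation covers.
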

\begin{proof}
	Observe that for $\vec{x},\vec{x}'\in X^M$  we have
	\begin{align*}
		k^{[M]}(\vec{x},\vec{x}') & = \frac{1}{M^2} \sum_{m,m'=1}^M k_0(x_m, x_{m'}')  =  \frac{1}{M^2} \sum_{m,m'=1}^M \langle k_0(\cdot, x_{m'}'), k_0(\cdot, x_m)\rangle_{k_0}  \\
		& = \left\langle \frac{1}{M} \sum_{m'=1}^M k_0(\cdot, x_{m'}'), \frac{1}{M} \sum_{m=1}^M k_0(\cdot,x_m)\right\rangle_{k_0}  = \langle f_{\muhat[\vec{x}']}, f_{\muhat[\vec{x}]} \rangle_{k_0}.
	\end{align*}

	Furthermore, we also have for any $\vec x\in X^M$
	\begin{align*}
		\|f^{k_0}_{\muhat[\vec x]}\|_{k_0}  & = \sqrt{ \langle \int k_0(\cdot,x)\mathrm{d}\muhat[\vec x](x),\int k_0(\cdot,x')\mathrm{d}\muhat[\vec x](x') \rangle_{k_0} } 
		\\
		& = \sqrt{\int \int \langle k_0(\cdot,x), k_0(\cdot, x') \rangle_{k_0} \mathrm{d}\muhat[\vec x](x)\mathrm{d}\muhat[\vec x](x')} \\
		& \leq \sqrt{ \int \int |k_0(x',x)|\mathrm{d}\muhat[\vec x](x)\mathrm{d}\muhat[\vec x](x')}  \leq \sqrt{C_{k_0}}.
	\end{align*}
	Let $\vec{x}_1,\vec{x}_2,\vec{x}_1',\vec{x}_2'\in X^M$, then
	\begin{align*}
		|k^{[M]}(\vec{x}_1,\vec{x}_1') - k^{[M]}(\vec{x}_2, \vec{x}_2')|
		& = |\langle f^{k_0}_{\muhat[\vec{x}_1']}, f^{k_0}_{\muhat[\vec{x}_1]} \rangle_{k_0} - \langle f^{k_0}_{\muhat[\vec{x}_2']}, f^{k_0}_{\muhat[\vec{x}_2]} \rangle_{k_0}| \\
		& = |\langle f^{k_0}_{\muhat[\vec{x}_1']} - f^{k_0}_{\muhat[\vec{x}_2']} , f^{k_0}_{\muhat[\vec{x}_1]}\rangle_{k_0} 
		\\
		& \quad + \langle f^{k_0}_{\muhat[\vec{x}_2']}, f^{k_0}_{\muhat[\vec{x}_1]} -  f^{k_0}_{\muhat[\vec{x}_2]}  \rangle_{k_0}| \\
		& \leq \| f^{k_0}_{\muhat[\vec{x}_1']} -  f^{k_0}_{\muhat[\vec{x}_2']}\|_{k_0} \| f^{k_0}_{\muhat[\vec{x}_1]}\|_{k_0}
		\\
		& \quad + \| f^{k_0}_{\muhat[\vec{x}_2']}\|_{k_0} \|f^{k_0}_{\muhat[\vec{x}_1]} -  f^{k_0}_{\muhat[\vec{x}_2]} \|_{k_0} \\
		& \leq \sqrt{C_k}(\| f^{k_0}_{\muhat[\vec{x}_1']} -  f^{k_0}_{\muhat[\vec{x}_2']}\|_{k_0} + \|f^{k_0}_{\muhat[\vec{x}_1]} -  f^{k_0}_{\muhat[\vec{x}_2]} \|_{k_0})
	\end{align*}
	Next, since $(X,d_{k_0})$ is separable, \cite[Theorem~21]{sriperumbudur2010hilbert} shows that 
	$\|f^{k_0}_{\muhat[\vec{x}_1]} -  f^{k_0}_{\muhat[\vec{x}_2]} \|_{k_0} \leq \widetilde \dkm(\muhat[\vec{x}_1],\muhat[\vec{x}_2])$
	and
	$\|f^{k_0}_{\muhat[\vec{x}'_1]} -  f^{k_0}_{\muhat[\vec{x}'_2]} \|_{k_0} \leq \widetilde\dkm(\muhat[\vec{x}'_1],\muhat[\vec{x}'_2])$,
	where
	\begin{equation*}
		\widetilde\dkm(\mu_1,\mu_2)= \sup\left\{ \int_X \phi(x) \mathrm{d}(\mu_1-\mu_2)(x) \mid \phi: X \rightarrow \R \text{ is 1-Lipschitz w.r.t. $d_{k_0}$} \right\},
	\end{equation*}
	the Kantorowich-Rubinstein distance induced by $d_{k_0}$.
	Altogether, we find that
	\begin{equation*}
		|k^{[M]}(\vec{x}_1,\vec{x}_1') - k^{[M]}(\vec{x}_2, \vec{x}_2')| \leq \sqrt{C_{k_0}}(\widetilde\dkm(\muhat[\vec{x}_1],\muhat[\vec{x}_2]) + \widetilde\dkm(\muhat[\vec{x}'_1],\muhat[\vec{x}'_2])),
	\end{equation*}
	but since $\sqrt{C_{k_0}}$ does not depend on $M$, this establishes uniform continuity of $k^{[M]}$ w.r.t. $\widetilde\dkm$.
\end{proof}
%
%
\begin{remark} \label{remark.ds_metric}
	In Proposition \ref{prop.uniform_continuity_ds}, we have not established uniform continuity of the double sum kernels \eqref{eq.double_sum_kernel} with respect to the Kantorowich-Rubinstein distance induced by the metric $d_X$. In particular, combining Propositions \ref{prop.ds_mfl} and \ref{prop.uniform_continuity_ds} is not enough to ensure that the double sum kernels fulfill Assumption \ref{assump.mfl_kernel_cond}.
	However, if $(X,d_{k_0})$ is a compact, separable metric space, then Proposition \ref{prop.uniform_continuity_ds} implies that Assumption \ref{assump.mfl_kernel_cond}, now with $d_{k_0}$ instead of $d_X$, applies to the kernel sequence \eqref{eq.double_sum_kernel}.
	In this case, Theorem \ref{thm.mfl_kernel} shows the existence of the mean field limit kernel and its associated RKHS, again with $d_{k_0}$ instead of $d_X$.
\end{remark}

Recall from the proof of Proposition \ref{prop.uniform_continuity_ds} that for all $M\in\N_+$ and $\vec{x},\vec{x}'\in X^M$  we have
\begin{align*}
	k^{[M]}(\vec{x},\vec{x}') & = \frac{1}{M^2} \sum_{m,m'=1}^M k_0(x_m, x_{m'}')  =  \frac{1}{M^2} \sum_{m,m'=1}^M \langle k_0(\cdot, x_{m'}'), k_0(\cdot, x_m)\rangle_{k_0}  \\
	& = \left\langle \frac{1}{M} \sum_{m'=1}^M k_0(\cdot, x_{m'}'), \frac{1}{M} \sum_{m=1}^M k_0(\cdot,x_m)\right\rangle_{k_0}  = \langle f^{k_0}_{\muhat[\vec{x}']}, f^{k_0}_{\muhat[\vec{x}]} \rangle_{k_0}.
\end{align*}
This equality implies  that for all $M\in\N_+$ the RKHS $H_0$ is a feature space and $\Phi_M: X^M\rightarrow H_0$,
$\Phi_M(\vec{x})=f^{k_0}_{\muhat[\vec{x}]}$ is a feature map for $k^{[M]}$.
Furthermore, defining $e^{[M]}(x)=(x \cdots x)\in X^M$ for $x\in X$ and $M\in\N_+$, we obtain that for all $M\in\N_+$, $\vec{x}\in X^M$ and $\bar{x}\in X$
\begin{equation*}
	\Phi^{[M]}(\vec{x})(e^{[M]}(\bar{x})) = k^{[M]}(e^{[M]}(\bar{x}),\vec{x}) = f^{k_0}_{\muhat[\vec{x}]}(\bar{x})
\end{equation*}
Note that $e^{[M]}(\bar{x})$ can be interpreted as a representation of $\delta_{\bar{x}}$ in $X^M$ since
$\muhat[e^{[M]}(\bar{x})]=\delta_{\bar{x}}$.
Altogether, we have now two different kernel-based embeddings of empirical probability distributions: We can embed $\muhat[\vec{x}]$ into $H_0$ via the kernel mean embedding  $f^{k_0}_{\muhat[\vec{x}]}$ or we can identify $\muhat[\vec{x}]$ with $\vec{x}$ and embed  into $H_M$ with the canonical feature map $\Phi^{[M]}(\vec{x})=k^{[M]}(\cdot,\vec{x})$. Those two embeddings are connected by evaluations on a Dirac distribution, represented by $\bar{x}\in X$ and $e^{[M]}(\bar{x})\in X^M$, respectively.  This leads to the commutative diagram in Figure \ref{fig.ds_kme_diagram}.
\begin{figure}\center
	\tikzset{every picture/.style={line width=0.75pt}} 
	\begin{tikzpicture}[x=0.75pt,y=0.75pt,yscale=-1,xscale=1]
		
		\draw    (393.27,46.36) -- (393.27,98.29) ;
		\draw [shift={(393.27,100.29)}, rotate = 270] [color={rgb, 255:red, 0; green, 0; blue, 0 }  ][line width=0.75]    (10.93,-3.29) .. controls (6.95,-1.4) and (3.31,-0.3) .. (0,0) .. controls (3.31,0.3) and (6.95,1.4) .. (10.93,3.29)   ;
		\draw    (110.16,41.36) -- (110.16,93.29) ;
		\draw [shift={(110.16,95.29)}, rotate = 270] [color={rgb, 255:red, 0; green, 0; blue, 0 }  ][line width=0.75]    (10.93,-3.29) .. controls (6.95,-1.4) and (3.31,-0.3) .. (0,0) .. controls (3.31,0.3) and (6.95,1.4) .. (10.93,3.29)   ;
		\draw    (178.71,114.14) -- (353.5,114.99) ;
		\draw [shift={(355.5,115)}, rotate = 180.28] [color={rgb, 255:red, 0; green, 0; blue, 0 }  ][line width=0.75]    (10.93,-3.29) .. controls (6.95,-1.4) and (3.31,-0.3) .. (0,0) .. controls (3.31,0.3) and (6.95,1.4) .. (10.93,3.29)   ;
		\draw    (176.71,24.14) -- (351.5,24.99) ;
		\draw [shift={(353.5,25)}, rotate = 180.28] [color={rgb, 255:red, 0; green, 0; blue, 0 }  ][line width=0.75]    (10.93,-3.29) .. controls (6.95,-1.4) and (3.31,-0.3) .. (0,0) .. controls (3.31,0.3) and (6.95,1.4) .. (10.93,3.29)   ;
		
		\draw (91.67,20.43) node    {$\textcolor[rgb]{0.29,0.56,0.89}{X}\textcolor[rgb]{0.29,0.56,0.89}{^{M}}\textcolor[rgb]{0.29,0.56,0.89}{\ni }\vec{x} \ \simeq \hat{\mu }[\vec{x}]\textcolor[rgb]{0.29,0.56,0.89}{\in }\mathcal{\textcolor[rgb]{0.29,0.56,0.89}{P}}\textcolor[rgb]{0.29,0.56,0.89}{(}\textcolor[rgb]{0.29,0.56,0.89}{\vec{x}}\textcolor[rgb]{0.29,0.56,0.89}{)} \ $};
		\draw (402.21,20.64) node    {$f_{\hat{\mu }[\vec{x}]}^{k_{0}}\textcolor[rgb]{0.29,0.56,0.89}{\in H}\textcolor[rgb]{0.29,0.56,0.89}{_{0}}$};
		\draw (112.67,113.71) node    {$\textcolor[rgb]{0.29,0.56,0.89}{H}\textcolor[rgb]{0.29,0.56,0.89}{_{M}}\textcolor[rgb]{0.29,0.56,0.89}{\ni } \ k^{[ M]}( \cdot ,\vec{x})$};
		\draw (260.7,11) node  [font=\scriptsize,color={rgb, 255:red, 208; green, 2; blue, 27 }  ,opacity=1 ]  {KME};
		\draw (85.74,65) node  [font=\scriptsize,color={rgb, 255:red, 208; green, 2; blue, 27 }  ,opacity=1 ]  {$\Phi ^{[ M]}$};
		\draw (261.74,125.43) node  [font=\scriptsize,color={rgb, 255:red, 208; green, 2; blue, 27 }  ,opacity=1 ]  {$Evaluated\ at\ e^{[ M]}(\vec{x}) \ \in \ X^{M}$};
		\draw (392.21,115.64) node    {$f_{\hat{\mu }[\vec{x}]}(\vec{x})$};
		\draw (456.74,66.43) node  [font=\scriptsize,color={rgb, 255:red, 208; green, 2; blue, 27 }  ,opacity=1 ]  {$Evaluated\ at\ \vec{x} \in \ X^M$};
	\end{tikzpicture}
	\caption{Commutative diagram on the relation of canonical feature map of $k^{[M]}$ and KMEs.}
	\label{fig.ds_kme_diagram}
\end{figure}


An interesting situation arises if we consider the weak$\ast$ convergence of empirical probability measures, metrized by $\dkm$, and the convergence of their embeddings. 
Consider the setting of Propositions \ref{prop.ds_mfl} and \ref{prop.uniform_continuity_ds} and assume additionally that the double sum kernels \eqref{eq.double_sum_kernel} are uniformly continuous, so that Theorem \ref{thm.mfl_kernel} applies and we have the mean field limit kernel $k$ and its associated RKHS $H_k$, as well as convergence (of a subsequence) of $k^{[M]}$ to $k$.
Let $\vec{x}_M\in X^M$ with $\muhat[\vec{x}_M]\convmode{\dkm}\mu$ for some $\mu\in\Pb(X)$.
Each empirical measure $\muhat[\vec{x}_M]$ can be embedded into $H_0$ via the kernel mean embeddings $f^{k_0}_{\muhat[\vec{x}_M]}$ 
and into $H_M$ by first identifying it with $\vec{x}_M$ and then using the canonical feature map $\Phi^{[M]}$.
Assume now that $k_0$ is characteristic, i.e., the map $\Pb(X)\rightarrow H_k$, $\mu \mapsto f_\mu^k$ is injective.
Under this assumption, convergence of the kernel mean embeddings metrizes the weak$\ast$ topology \cite[Theorem~12]{simon2018kernel}, so we get that $f^{k_0}_{\muhat[\vec{x}_M]} \convmode{H_0} f^{k_0}_{\mu}$.
Since $k$ is the MFL of $k^{[M]}$ and the former is continuous w.r.t. $\dkm$, we also get up to a subsequence
$k^{[M]}(\cdot, \vec{x}_M) \rightarrow k(\cdot,\mu)$ as a mean field limit. Note that the kernel mean embeddings that appear here are all well-defined, cf. \cite[Theorem~1]{sriperumbudur2010hilbert}.

The preceding discussion is summarized as a diagram in Figure \ref{fig.ds_kme_mfl_diagram}.
\begin{figure}
	\centering
	\tikzset{every picture/.style={line width=0.75pt}} 
	
	\begin{tikzpicture}[x=0.65pt,y=0.75pt,yscale=-1,xscale=1]
		
		\draw    (288.5,88) -- (288.5,39) ;
		\draw [shift={(288.5,37)}, rotate = 90] [color={rgb, 255:red, 0; green, 0; blue, 0 }  ][line width=0.75]    (10.93,-3.29) .. controls (6.95,-1.4) and (3.31,-0.3) .. (0,0) .. controls (3.31,0.3) and (6.95,1.4) .. (10.93,3.29)   ;
		\draw    (352.71,100.8) -- (449.26,102.38) ;
		\draw [shift={(451.26,102.42)}, rotate = 180.94] [color={rgb, 255:red, 0; green, 0; blue, 0 }  ][line width=0.75]    (10.93,-3.29) .. controls (6.95,-1.4) and (3.31,-0.3) .. (0,0) .. controls (3.31,0.3) and (6.95,1.4) .. (10.93,3.29)   ;
		\draw    (54.33,120.21) -- (54.33,164.98) ;
		\draw [shift={(54.33,166.98)}, rotate = 270] [color={rgb, 255:red, 0; green, 0; blue, 0 }  ][line width=0.75]    (10.93,-3.29) .. controls (6.95,-1.4) and (3.31,-0.3) .. (0,0) .. controls (3.31,0.3) and (6.95,1.4) .. (10.93,3.29)   ;
		\draw    (286.47,121.07) -- (286.47,165.85) ;
		\draw [shift={(286.47,167.85)}, rotate = 270] [color={rgb, 255:red, 0; green, 0; blue, 0 }  ][line width=0.75]    (10.93,-3.29) .. controls (6.95,-1.4) and (3.31,-0.3) .. (0,0) .. controls (3.31,0.3) and (6.95,1.4) .. (10.93,3.29)   ;
		\draw    (213.4,103.54) -- (120.64,102.57) ;
		\draw [shift={(118.64,102.55)}, rotate = 0.6] [color={rgb, 255:red, 0; green, 0; blue, 0 }  ][line width=0.75]    (10.93,-3.29) .. controls (6.95,-1.4) and (3.31,-0.3) .. (0,0) .. controls (3.31,0.3) and (6.95,1.4) .. (10.93,3.29)   ;
		\draw    (218.36,184.21) -- (125.6,183.24) ;
		\draw [shift={(123.6,183.22)}, rotate = 0.6] [color={rgb, 255:red, 0; green, 0; blue, 0 }  ][line width=0.75]    (10.93,-3.29) .. controls (6.95,-1.4) and (3.31,-0.3) .. (0,0) .. controls (3.31,0.3) and (6.95,1.4) .. (10.93,3.29)   ;
		\draw    (351.71,26.8) -- (448.26,28.38) ;
		\draw [shift={(450.26,28.42)}, rotate = 180.94] [color={rgb, 255:red, 0; green, 0; blue, 0 }  ][line width=0.75]    (10.93,-3.29) .. controls (6.95,-1.4) and (3.31,-0.3) .. (0,0) .. controls (3.31,0.3) and (6.95,1.4) .. (10.93,3.29)   ;
		\draw    (489.5,88) -- (489.5,39) ;
		\draw [shift={(489.5,37)}, rotate = 90] [color={rgb, 255:red, 0; green, 0; blue, 0 }  ][line width=0.75]    (10.93,-3.29) .. controls (6.95,-1.4) and (3.31,-0.3) .. (0,0) .. controls (3.31,0.3) and (6.95,1.4) .. (10.93,3.29)   ;
		
		\draw (286.89,21.38) node    {$\mu \textcolor[rgb]{0.29,0.56,0.89}{\in }\mathcal{\textcolor[rgb]{0.29,0.56,0.89}{P}}\textcolor[rgb]{0.29,0.56,0.89}{(}\textcolor[rgb]{0.29,0.56,0.89}{\vec{x}}\textcolor[rgb]{0.29,0.56,0.89}{)} \ $};
		\draw (492.22,22.44) node    {$f_{\mu }^{k_{0}}\textcolor[rgb]{0.29,0.56,0.89}{\in H}\textcolor[rgb]{0.29,0.56,0.89}{_{0}}$};
		\draw (282.76,101.43) node    {$\textcolor[rgb]{0.29,0.56,0.89}{X}\textcolor[rgb]{0.29,0.56,0.89}{^{M}}\textcolor[rgb]{0.29,0.56,0.89}{\ \ni }\vec{x}_{M} \simeq \ \hat{\mu }[\vec{x}_{M}]$};
		\draw (394.32,14.07) node  [font=\scriptsize,color={rgb, 255:red, 208; green, 2; blue, 27 }  ,opacity=1 ]  {$KME$};
		\draw (269.61,57.44) node  [font=\scriptsize,color={rgb, 255:red, 208; green, 2; blue, 27 }  ,opacity=1 ]  {$d_{KR}$};
		\draw (491.39,102.24) node    {$f_{\hat{\mu }[\vec{x}_{M}]}^{k_{0}}\textcolor[rgb]{0.29,0.56,0.89}{\in H}\textcolor[rgb]{0.29,0.56,0.89}{_{0}}$};
		\draw (505.87,61.28) node  [font=\scriptsize,color={rgb, 255:red, 208; green, 2; blue, 27 }  ,opacity=1 ]  {$H_{0}$};
		\draw (396.79,112.09) node  [font=\scriptsize,color={rgb, 255:red, 208; green, 2; blue, 27 }  ,opacity=1 ]  {$KME$};
		\draw (58.06,101.43) node    {$\textcolor[rgb]{0.29,0.56,0.89}{H}\textcolor[rgb]{0.29,0.56,0.89}{_{M}}\textcolor[rgb]{0.29,0.56,0.89}{\ni } \ k^{[ M]}( \cdot ,\vec{x})$};
		\draw (173.78,89.54) node  [font=\scriptsize,color={rgb, 255:red, 208; green, 2; blue, 27 }  ,opacity=1 ]  {$\Phi ^{[ M]}$};
		\draw (58.88,183.84) node    {$\textcolor[rgb]{0.29,0.56,0.89}{H}\textcolor[rgb]{0.29,0.56,0.89}{_{k}}\textcolor[rgb]{0.29,0.56,0.89}{\ni } \ k( \cdot ,\mu )$};
		\draw (177.91,194.49) node  [font=\scriptsize,color={rgb, 255:red, 208; green, 2; blue, 27 }  ,opacity=1 ]  {$\Phi _{k}$};
		\draw (301.83,143.32) node  [font=\scriptsize,color={rgb, 255:red, 208; green, 2; blue, 27 }  ,opacity=1 ]  {$d_{KR}$};
		\draw (61.43,136.38) node  [font=\scriptsize,color={rgb, 255:red, 208; green, 2; blue, 27 }  ,opacity=1 ]  {$MFL\ \ \ \ M\ \rightarrow \infty $};
		\draw (284.41,180.99) node    {$\mu \textcolor[rgb]{0.29,0.56,0.89}{\in }\mathcal{\textcolor[rgb]{0.29,0.56,0.89}{P}}\textcolor[rgb]{0.29,0.56,0.89}{(}\textcolor[rgb]{0.29,0.56,0.89}{\vec{x}}\textcolor[rgb]{0.29,0.56,0.89}{)} \ $};

	\end{tikzpicture}
	\caption{Diagram illustration of the relations of double sum kernel, KME and MFL.}
	\label{fig.ds_kme_mfl_diagram}
\end{figure}

\subsection{Gaussian kernels}
As an illustration of the preceding developments, we now present a simple, concrete example. The example is a  particular case of radial basis functions. Its main property is the symmetry and is fulfilled in particular for Gaussian kernels. Those are A popular choice for a kernel in machine learning.
A Gaussian kernel $k_\gamma: \R^d \times \R^d \rightarrow \R$ is given by  $k_\gamma(x,x')=\exp\left(-\|x-x'\|^2/2\gamma \right)$, where $\gamma\in\Rp$ plays the role of a lengthscale.
For more details on this kernel and its associated RKHS, we refer to \cite[Section~4.4]{steinwart2008support}.

We start with the pullback construction: Let $K\subseteq \R^d$ be nonempty and compact and define $\phi_M: K^M \rightarrow \R$, $\phi_M(\vec x)=\frac{1}{M}\sum_{m=1}^M x_m$. 
It is then immediately clear that $k_0=k_\gamma$, restricted to $K\times K$, and $\phi_M$ fulfill the assumptions of Proposition \ref{prop.pullback} with $Y=\R^d$. { The pullback construction allows to ensure the symmetry required by Assumption \ref{assump.mfl_kernel_cond} and hence allows a mean field limit kernel and associated mean field RKHS.}

Let us turn to the double sum kernel construction: Let again $K\subseteq \R^d$ be nonempty and compact. It is clear that $k_\gamma$ fulfills the conditions of Proposition \ref{prop.ds_mfl}. Furthermore, since $k_\gamma$ is continuous, the topology induced by $d_{k_\gamma}$ is coarser than the relative topology on $K$ induced by the Euclidean distance \cite[Lemma~4.29]{steinwart2008support}, hence $(K,d_{k_\gamma})$ is separable and also Proposition \ref{prop.uniform_continuity_ds} applies.
In particular, if we replace the Euclidean distance by $d_{k_\gamma}$, the mean field limit of the double sum kernels exists in the sense of Theorem \ref{thm.mfl_kernel}, cf. Remark \ref{remark.ds_metric}

\section{Conclusion} \label{section.conclusion}
In this article, we presented appropriate conditions for sequences of kernels to exhibit a mean field limit.
We rigorously proved the existence of this limit and showed that it is a kernel, having a corresponding reproducing kernel Hilbert space.
Furthermore, we investigated this latter object and how it relates to the kernels leading to the limit. In particular, we showed the commutative relationship in this context.
Additionally, we provided two example classes of appropriate kernel sequences that are based on established concepts in the context of kernel methods. { A possible drawback of the presented method are the strong symmetry assumptions both on functionals as well as kernels. Therefore, possible applications might be limited to the approximation of large scale but symmetric functionals. 
}

\section*{Acknowledgments} 
This work is funded in part under the Excellence Strategy of the Federal Government and the Länder (G:(DE-82)EXS-SF-SFDdM035), which the authors gratefully acknowledge. The authors further thank the Deutsche Forsch\-ungs\-ge\-mein\-schaft (DFG, German Research Foundation) for the financial support through 320021702/GRK2326, 33849990/IRTG-2379, CRC1481, 442047500/SFB1481, HE5386/18-1,19-2,22-1,23-1 and under Germany’s Excellence Strategy EXC-2023 Internet of Production 390621612.

\bibliographystyle{siam} 
\bibliography{preprint} 

\medskip

\end{document}